\newtheorem{theorem}{Theorem}[section]
\newtheorem{lemma}[theorem]{Lemma}
\newtheorem{corollary}[theorem]{Corollary}
\theoremstyle{definition}
\newtheorem{definition}[theorem]{Definition}
\newtheorem{example}[theorem]{Example}
\newtheorem{conjecture}[theorem]{Conjecture}
\newtheorem{remark}[theorem]{Remark}
\newcommand{\set}[1]{\left\{ #1 \right\}}
\newcommand{\R}{\mathbb{R}}
\newcommand{\N}{\mathbb{N}}
\newcommand{\PP}{\mathbb{P}}
\newcommand{\id}{\textnormal{id}}
\DeclareMathOperator{\diag}{diag}
\DeclareMathOperator*{\argmin}{arg\,min}
\newcommand{\PR}[2][]{\mathop{\mathbb{P}}_{#1}\left( #2 \right)}
\newcommand{\Merg}{\M_d^{\textnormal{erg}}}
\newcommand{\Mirr}{\M_d^{\textnormal{irr}}}
\newcommand{\eps}{\varepsilon}
\newcommand{\lazy}{\mathcal{L}}
\newcommand{\M}{\mathcal{M}}
\newcommand{\TV}{_{\textnormal{TV}}}
\newcommand{\tmix}{t_{\mathsf{mix}}}
\newcommand{\pistar}{\pi_\star}
\newcommand{\gammaps}{\gamma_\text{ps}}
\newcommand{\beq}{\begin{eqnarray*}}
\newcommand{\eeq}{\end{eqnarray*}}
\newcommand{\beqn}{\begin{eqnarray}}
\newcommand{\eeqn}{\end{eqnarray}}
\title{\vspace{-2cm}On the $\alpha$-lazy version of Markov chains in estimation and testing problems}
\author{Sela Fried\thanks{A postdoctoral fellow in the Department of Computer Science at the Ben-Gurion University of the Negev. Research supported by ISF grant No.~1456/18.}  \\  \href{mailto:selaf@post.bgu.ac.il}{selaf@post.bgu.ac.il}  \and Geoffrey Wolfer\thanks{A JSPS International Research Fellow (Department of Computer and Information Sciences, Tokyo University of Agriculture and Technology).}   \\  \href{mailto:geo-wolfer@m2.tuat.ac.jp}{geo-wolfer@m2.tuat.ac.jp}  }
\date{}
\begin{document}

\maketitle

\begin{abstract}
Given access to a single long trajectory generated by an unknown irreducible Markov chain $M$, we simulate an $\alpha$-lazy version of $M$ which is ergodic. This enables us to generalize recent results on estimation and identity testing that were stated for ergodic Markov chains in a way that allows fully empirical inference. In particular, our approach shows that the pseudo spectral gap introduced by \citet{paulin2015concentration} and defined for ergodic Markov chains may be given a meaning already in the case of irreducible but possibly periodic Markov chains.
\end{abstract}
 
\section{Introduction}\label{intro}

Estimating the parameters of a discrete distribution and distinguishing whether an unknown distribution is identical to a reference one or is $\eps$-far from it under some notion of distance, assuming access to iid samples, are two classical and well studied problems in statistics and computer science (e.g. \citet{han2015minimax}, \citet{kamath2015learning}, \citet{orlitsky2015competitive}, \citet{batu2001testing}, \citet{valiant2017automatic} and the references therein). In contrast, research on the analogue problems assuming the samples are obtained from a single long trajectory generated by an unknown Markov chain is rather young. This work is concerned with the latter and adds to the still rather short list of works on the subject: \citet{wolfer2019estimating, wolfer2020minimax, wolfer2021statistical}, \citet{cherapanamjeri2019testing}, \citet{daskalakis2018testing}, \citet{chan2021learning}, \citet{fried2021identity} and \citet{hao2018learning}. We shall review the results of these works in the Related literature section.

Before we rigorously present our main results, let us somewhat informally sketch their motivation: In \citet[Theorem 3.1]{wolfer2021statistical} an estimator for the transition matrix $M$ of an unknown ergodic Markov chain was constructed, based on a single long trajectory generated by the Markov chain. In addition, they upper bounded the sample complexity of obtaining an estimation whose distance from $M$ under the infinity norm is less than a prescribed $\eps\in(0,1)$ with high probability. The assumption that the unknown Markov chain is ergodic is crucial in their sample complexity analysis since it relies on concentration bounds (cf. \citet{paulin2015concentration}) which assume ergodicity. We have noticed that it is possible to generalize their results to irreducible but possibly periodic Markov chains in the following way: Let $M$ be an irreducible Markov chain on $d$ states from which we can sample a single long trajectory of arbitrary length. Recall that if $\alpha\in(0,1)$ and $I$ denotes the identity matrix of size $d$, then $\lazy_\alpha(M)=\alpha I+(1-\alpha)M$ is called the $\alpha$-lazy version of $M$. It is well known that the $\alpha$-lazy version of an irreducible Markov chain is  ergodic. Crucially, it is possible to simulate $\lazy_\alpha(M)$ even when one only has access (as is the case here) to a black box that generates a single long trajectory from $M$ (cf. (6) of Section \ref{sec; 4}). This simulation results in a trajectory sampled from the ergodic Markov chain $\lazy_\alpha(M)$ on which we may apply the estimator mentioned above. Finally, we pull back the estimation of $\lazy_\alpha(M)$ to an estimation of $M$.
\bigskip

Several questions immediately arise from this procedure:

\begin{enumerate}
    \item [(a)] Can it be done with other estimation problems or even other statistical inference problems, e.g. identity testing? 
    \item [(b)] The $\alpha$-lazy version of a Markov chain is a special case of a matrix transformation. Can the procedure be done with other transformations?
    \item [(c)] An upper bound for the sample complexity of the original (ergodic) estimation problem involves (the inverse of) the pseudo spectral gap $\gammaps(M)$ of an ergodic Markov chain $M$. It follows that the sample complexity in the irreducible case is upper bounded by $\gammaps(\lazy_\alpha(M))$ for $M$ an irreducible Markov chain. Since $\gammaps(M)=0$ for an irreducible but periodic Markov chain $M$, no meaningful comparison between $\gammaps(M)$ and $\gammaps(\lazy_\alpha(M))$ is possible. Instead, we ask the following question: Suppose $M$ is ergodic. How are $\gammaps(M)$ and $\gammaps(\lazy_\alpha(M))$ related? An answer to this question quantifies the ``price" one has to pay for not being sure whether an unknown irreducible Markov chain is periodic or not.
\end{enumerate}

Our first two main results address the questions in (a) and (b) that are related to Markov chains estimation problems, by which we mean a tuple $(\Theta, \rho, \M^\circ)$ where $\Theta$ is a set, $\rho\colon\Theta\times\Theta\to\mathbb{R}_{+}$ is a distance function, i.e., for $x,y\in\Theta$ it holds: $\rho(x,y)=0$ if and only if $x=y$ (Notice that we neither assume that $\rho$ is symmetric nor that it satisfies the triangle inequality. Thus, in addition to metrics such as the total variation or the Hellinger distance, our machinery is applicable to, for example, the Kullback–Leibler divergence). The set $\M^\circ$ is a subset of the set of all transition matrices of irreducible Markov chains on $d$ states which we denote by $\Mirr$. The set $\Theta$ consists of the values that the parameter to be estimated can take and is accompanied by a map $\theta\colon \Mirr\to\Theta$. The distance function $\rho$ is used to measure the distance between the true value of the parameter and its estimation. Following the works mentioned above, in the setting we consider we have access to a single long trajectory generated from an unknown Markov chain $M$, started from an arbitrary state, and the purpose is to construct an estimator for some parameter of interest $\theta(M)$ such that the distance between the estimation and the true value of the parameter is less than a prescribed $\eps\in(0,1)$. Along with the estimator we would like to know in advance how long the trajectory must be for the estimation to succeed. The minimal necessary trajectory length is referred to as the sample complexity. Since the estimation relies on a random trajectory, there is generally some  probability that the trajectory obtained is not informative. Thus, in addition to $\eps$, the sample complexity depends also on $\delta\in(0,1)$ such that the estimator is guaranteed to succeed with probability at least $1-\delta$.

The following theorem provides a sufficient condition for the feasibility of the procedure we described above. It uses the notions of a solution to an  estimation problem and of extendibility of their solutions which will be defined in Section \ref{sec; est}:

\begin{theorem}\label{thm; i1}
Consider a Markov chains estimation problem $(\Theta, \rho, \M^\circ)$. Let $\varphi\colon \Mirr\to\Mirr, g\colon\Theta\to\Theta$ and $\ell\colon (0,1)\to (0,1)$ be such that for every $x\in\Theta, M\in\varphi^{-1}(\M^\circ)$ and $\eps\in(0,1)$ it holds 
\begin{equation}\label{eq; 6}
\rho(x,\theta(\varphi(M)))<\ell(\eps)\Longrightarrow \rho(g(x),\theta(M))<\eps.\end{equation} Then every solution to $(\Theta, \rho, \M^\circ)$ extends to $\varphi^{-1}(\M^\circ)$.
\end{theorem}

Let us demonstrate a typical usage of Theorem \ref{thm; i1} by applying it on the problem of estimating the transition matrix mentioned above: We have 
$$\Theta = \Mirr,\;\theta(M) = M,\;\;\forall M\in\Mirr,\; \rho(M, \bar{M}) = ||M-\bar{M}||_\infty
, \;\;\forall M,\bar{M}\in\Theta.$$ For $\M^\circ$ we take $\M_{\pistar,\gammaps}$ which is defined as the set of all ergodic Markov chains on $d$ states whose minimum stationary probability and pseudo spectral gap are lower bounded by $\pistar,\gammaps\in(0,1)$, respectively. The set $\M_{\pistar,\gammaps}$ is the default choice in this work since the sample complexity of the problems that we consider was shown to be upper bounded by expressions that involve $\frac{1}{\gammaps}$ and $\frac{1}{\pistar}$. Taking $\varphi=\lazy_\alpha$ and $\ell(\eps) = (1-\alpha)\eps$ for every $\eps\in(0,1)$, a natural candidate for $g$ is $\lazy_\alpha^{-1}$ but it may well result in a matrix that is not irreducible. Even worse, we might end up with negative entries on the main diagonal, i.e., not a row-stochastic matrix. In order to guarantee an estimation which is an irreducible Makrov chain, $\lazy_\alpha^{-1}$ is followed by a projection on the set of all row-stochastic matrices of size $d$ which we denote by $\M_d$, which, in turn, is followed by an additional adjustment to guarantee irreducibility. We discuss this example in detail in Example \ref{ex; 722}(b).

Our second main result shows that there are cases in which extending solutions is hard:

\begin{theorem}\label{thm; i2}
Let $(\Theta, \rho, \M^\circ)$ be a Markov chains estimation problem with a finite sample complexity and let $\varphi\colon \Mirr\to\Mirr$ be such that there exist
 $M_1,M_2\in\varphi^{-1}(\M^\circ)$ with $\theta(\varphi(M_1))=\theta(\varphi(M_2))$ but $\theta(M_1)\neq\theta(M_2)$. Let $g\colon\Theta\to\Theta$ be such that for every $R>0$ there exists $\sigma>0$ with the following property: Whenever $x,y\in\Theta$ satisfy $\rho(x,y)<\sigma$ then $\rho(g(x),g(y))<R$ (this holds, for example, if $g$ is uniformly continuous). Then no solution to $(\Theta, \rho, \M^\circ)$ extends to $\varphi^{-1}(\M^\circ)$ via $g$.
\end{theorem}

In Example \ref{ex; 11} we shall show that the assumptions of Theorem \ref{thm; i2} are satisfied in the setting of \citet[Theorem 8.1]{wolfer2019estimating} who constructed an estimator for the pseudo spectral gap (in absolute error).

Our third main result is a reformulation of Theorem \ref{thm; i1} to Markov chains identity testing problems. In this setting one needs to decide, based on a single long trajectory generated from an unknown Markov chain, whether a certain parameter of the Markov chain equals a specific value or is $\eps$-far from it under some notion of distance for $\eps\in(0,1)$. We will apply the theorem on the Markov chains identity testing problem considered by \citet[Theorem 4.1]{wolfer2020minimax} in which the parameter of interest is the transition matrix. The formal definition of Markov chains identity testing problems and of extendibility of their solutions is given in Section \ref{sec; 5}.

\begin{theorem}\label{thm; i3}
Let $(\Theta, \rho, \M^\circ, \overline{\M^\circ})$ be a Markov chains identity testing problem.
Let $\varphi\colon \Mirr\to\Mirr$ and $\ell \colon (0,1)\to (0,1)$ be such that for every $ M\in\varphi^{-1}(\M^\circ), \overline{M}\in\varphi^{-1}(\overline{\M^\circ})$ and $\eps\in(0,1)$ it holds 

\begin{equation}\label{eq; 50}
\begin{cases}
\theta(M)=\theta(\overline{M})&\Longrightarrow\theta(\varphi(M))=\theta(\varphi(\overline{M}))\\\rho(\theta(M),\theta(\overline{M}))
>\eps&\Longrightarrow \rho(\theta(\varphi(M)),\theta(\varphi(\overline{M})))>\ell(\eps).
\end{cases}
\end{equation}
Then every solution to $(\Theta, \rho, \M^\circ, \overline{\M^\circ})$ extends to $(\varphi^{-1}(\M^\circ), \varphi^{-1}(\overline{\M^\circ}))$.
\end{theorem}

Our last main result addresses the question in (c), namely, the connection between $\gammaps(M)$ and $\gammaps(\lazy_\alpha(M))$ for an ergodic Markov chain $M$. Recall that the pseudo spectral gap captures the mixing time (cf. (\ref{eq; 11})) and that there exists a universal constant $c$ such that
$$\tmix(\lazy_\alpha(M))\leq\frac{c}{1-\alpha}\max\left\{\frac{1}{1-\alpha},\tmix(M)\right\}$$ (Lemma \ref{lem; 440}).
We conjecture (Conjecture \ref{con; 10}) that a similar inequality holds for the pseudo spectral gap. As a consequence of the following theorem we show that for an ergodic Markov chain $M$ such that $\gammaps(M)$ is obtained  at $k=1$ (cf. (\ref{eq; 64})) it holds \begin{equation}\label{eq; 44}\gammaps(\lazy_\alpha(M))\geq(1-\alpha)\gammaps(M).\end{equation} This confirms the conjecture 
for this class of ergodic Markov chains that contains all ergodic and reversible Markov chains.

In the following theorem
$\gamma(M)$ denotes the spectral gap of a reversible Markov chain $M$ and $M^\dagger$ denotes the multiplicative reversibilization of an irreducible Markov chain $M$:

\begin{theorem}\label{thm; i4}
Let $M$ be an ergodic Markov chain. Then $$\gamma(\lazy_\alpha(M)^\dagger)\geq(1-\alpha)\gamma(M^\dagger).$$
\end{theorem}

\bigskip

The paper is structured as follows: After a short literature review, in Section \ref{sec; a} we recall definitions and well known facts regarding Markov chains and their $\alpha$-lazy versions that will be used throughout this work. Our main reference here is \citet{levin2017markov}; In section \ref{sec; b}, that contains our main results, we formulate the framework and demonstrate its usage; Section \ref{sec; d} is devoted to the sample complexity aspect of moving to an $\alpha$-lazy version.

\subsection{Related literature}

In recent years there have been attempts to go beyond the iid assumption in statistical inference problems. The Markovian setting in which every sample depends only on the sample immediately preceding it is a natural weakening of the iid assumption. Works on the subject differ mainly in their choice of the distance notion between Markov chains: \citet{wolfer2020minimax, wolfer2021statistical} consider each row of the transition matrix separately and under the total variation distance between distributions. This leads to a distance notion based on the infinity norm. Their analysis relies on the concentration bounds of \citet{paulin2015concentration} involving the pseudo spectral gap which is defined only for ergodic Markov chains. \citet{chan2021learning} 
generalized Wolfer and Kontorovich's results to all irreducible Markov chains by recognizing that the sample complexity under the infinity norm is characterized by what they refer to as the $k$-cover time denoted by $t_\text{cov}^{(k)}$ and defined to be the expected length of a trajectory such that every state is visited at least $k$ times. Taking a different path, \citet{daskalakis2018testing} and \citet{cherapanamjeri2019testing} use a distance notion that relies on the largest eigenvalue of the geometric mean of the transition matrices. They make the stringent assumption that the Markov chains are symmetric (\citet{fried2021identity} subsequently showed that, in essence, reversibility suffices).

Our work is especially related to \citet{chan2021learning} since it follows from both theirs and ours that the results of Wolfer and Kontorovich mentioned above apply to all irreducible Markov chains: \citet{chan2021learning} achieve this by considering $t_\text{cov}^{(k)}$ that is defined for every irreducible Markov chain and we by moving to the $\alpha$-lazy version.

We argue that while the $k$-cover provides a sharper sample complexity characterization of the problem of inference in the single long trajectory setting, compared to the lower and upper bounds of Wolfer and Kontorovich that are based on the mixing properties and exhibit a logarithmic gap, at the moment it is not clear how to exploit this sharpness. Indeed, first, there is currently no method to estimate the $k$-cover time from a single long trajectory directly. Second, all the upper bounds on the $k$-cover times in terms of well known quantities that \citet{chan2021learning} provide (e.g. $t_\text{cov}^{(k)}=\tilde{O}\left( t_\text{cov}+\frac{k}{\pistar}\right)$ for irreducible Markov chains and $ t_\text{cov}^{(k)}=\tilde{O}\left( \frac{1}{\pistar\gammaps}+\frac{k}{\pistar}\right)$ for ergodic Markov chains) involve the minimum stationary probability. To the best of our knowledge, Example \ref{ex; 722} (a) (see also Lemma \ref{lem; 648}), which is a combination of our machinery and the estimator of \citet{wolfer2019estimating}, provides the only fully empirical method for the estimation of the minimum stationary probability of all irreducible Markov chains from a single long trajectory. Furthermore, \citet{wolfer2019estimating} provide a similar estimator of the pseudo spectral gap. It seems that although considerable progress has been made regarding the computational problem of approximating the cover time (e.g. \citet{feige2003deterministic}, \citet{bui2007compute}, \citet{feige2009deterministic} and \citet{ding2011cover}), the statistical estimation of the cover time from a single trajectory remains a challenging open question.

\section{Preliminaries}\label{sec; a}

\subsection{Markov chains}

Let $\Omega$ be a finite state space which we assume without loss to be equal to $[d] = \{1,2,\ldots,d\}$ where $2\leq d\in\N$. We denote by $\M_d$ the set of all $d\times d$ row-stochastic matrices and refer to elements of $\M_d$ as \emph{Markov chains}. For a matrix $M$ of size $d\times d$ and $i,j\in[d]$ we write $M(i,j)$ for the entry at the $i$th row and the $j$th column of $M$. Recall that a Markov chain  $M$ is called \emph{irreducible} if for every $i,j\in[d]$ there exists $t\in\N$ such that $M^t(i,j)>0$. The \emph{period} of a state $i\in [d]$ is defined as $$\text{gcd}\{t\in\N\;|\;M^t(i,i)>0\}.$$ If a Markov chain $M$ is irreducible, all its states share the same period (\citet[Lemma 1.6]{levin2017markov}). In this case, the \emph{periodicity} of $M$ is defined as the period of one of its states. If the periodicity is $1$, then $M$ is said to be \emph{aperiodic} and otherwise \emph{periodic}. Irreducible and aperiodic Markov chains are called \emph{ergodic}.

We denote by $\Mirr$ (resp. $\Merg$) the subset of $\M_d$  consisting of all irreducible (resp. ergodic) Markov chains. Let  $M\in\M_d$ and  $\alpha\in(0,1)$ (to be used henceforth). Following \citet{montenegro2006mathematical} and \citet{Hermon2016MaximalIA}, we call $\lazy_\alpha(M) = \alpha I + (1-\alpha)M$ the \emph{$\alpha$-lazy version of $M$} where $I$ is the identity matrix of size $d$. The simplex of all probability distributions over $[d]$ will be denoted by $\Delta_d$. If  $\mu\in\R^d$ and $i\in[d]$ we write $\mu(i)$ for the $i$th entry of $\mu$. Vectors are assumed to be row-vectors.

Let $M\in\M_d,\mu\in\Delta_d, m\in\N$ and $i_1,\ldots,i_m\in[d]$. By $(X_1,\ldots,X_m)\sim(M,\mu)$ we mean  $$\PP((X_1,\ldots,X_m)=(i_1,\ldots,i_m))=\mu(i_1)\prod_{t=1}^{m-1}M(i_t,i_{t+1}).$$

We say that $\pi\in\Delta_d$ is a \emph{stationary distribution} of  $M\in\M_d$ if $\pi M=\pi$. All Markov chains on a finite state space have at least one stationary distribution (\citet[Section 1.7]{levin2017markov}) and irreducible markov chains have unique stationary distributions (\citet[Corollary 1.17]{levin2017markov}). Furthermore, if $\pi$ is the stationary distribution of $M\in\Mirr$, then the \emph{minimum stationary probability of} $M$ defined as $$\pistar(M) = \min_{i\in[d]}\{\pi(i)\}$$ satisfies $\pistar(M)>0$ (\citet[Proposition 1.14]{levin2017markov}). Ergodic Markov chains converge to their stationary distribution in total variation (\citet[Theorem 4.9]{levin2017markov}), i.e., if $M\in\Merg$ and $\pi$ is its stationary distribution, then $$d(t):=\max_{i\in[d]}||M^t(i,\cdot)-\pi||_\text{TV}\underset{t\to\infty}{\longrightarrow}0$$ where $$||\mu-\nu||_\text{TV}=\frac{1}{2}\sum_{i\in[d]}|\mu(i)-\nu(i)|, \;\;\forall \mu,\nu\in\Delta_d$$  (\citet[Proposition 4.2]{levin2017markov}). For $\eps>0$ denote $$\tmix(M, \eps)=\min\{t\in\N\;|\;d(t)\leq\eps\}$$ and define the \emph{mixing time of $M$} by $\tmix(M) :=\tmix(M, 1/4).$

Let $M\in\Mirr$ with stationary distribution $\pi$. We write $Q (M)=\diag(\pi)M$ for the \emph{edge measure} of $M$ where $\diag(\pi)$ is the diagonal matrix whose entries correspond to $\pi$ and call $M$ \emph{reversible} if $Q (M)$ is symmetric. Suppose now $M\in\Merg$ is reversible. Then $1$ is an eigenvalue of $M$ with a one dimensional eigenspace and all eigenvalues of $M$ are real and lie in $(-1,1]$ (\citet[Lemmas 12.1 and 12.2]{levin2017markov}). They may be therefore ordered: $$1=\lambda_1>\lambda_2\geq\cdots \geq\lambda_d>-1.$$ The \emph{spectral gap} and the \emph{absolute spectral gap of} $M$ are defined by $$\gamma(M)=1-\lambda_2,\;\;\;\text{and}\;\;\;\gamma_\star(M) = 1 - \max\{\lambda_2, |\lambda_d|\}, \text{  respectively}.$$
The absolute spectral gap and the minimum stationary probability of $M$ capture $\tmix(M)$ (\citet[Theorems 12.3 and 12.4]{levin2017markov}): $$\left(\frac{1}{\gamma_\star(M)}-1\right)\ln 2\leq\tmix(M)\leq\frac{1}{\gamma_\star(M)}\ln\frac{4}{\pistar(M)}.$$

For ergodic Markov chains that are not necessarily reversible there exists an analogue of the absolute spectral gap, namely the \emph{pseudo spectral gap} that was introduced by \citet{paulin2015concentration}: Let $M\in\Merg$ with stationary distribution $\pi$. Define the \emph{time reversal of} $M$ by $M^* = \diag(\pi)^{-1}M^T\diag(\pi)$ and the \emph{multiplicative reversibilization of} $M$ by $M^\dagger = M^*M$. The pseudo spectral gap of $M$ is then defined to be
 \begin{equation}\label{eq; 64}\gammaps(M)=\max_{k\in\N}\left\{\frac{\gamma\left((M^k)^\dagger\right)}{k}\right\}.\end{equation} By \citet[Proposition 3.4]{paulin2015concentration}, \begin{equation}\label{eq; 11}\frac{1}{\gammaps(M)}\leq \tmix(M)\leq \frac{1}{\gammaps(M)}\left(1+2\ln2+\ln\frac{1}{\pistar(M)}\right).\end{equation}
For $\pistar,\gammaps\in(0,1)$ denote by $\M_{\pistar, \gammaps}$ the set of all ergodic Markov chains whose minimum stationary probability and pseudo spectral gap are lower bounded by $\pistar$ and $\gammaps$, respectively.

If $\pi$ is the stationary distribution of some irreducible Markov chain and $\mu\in\Delta_d$, define $$||\mu/\pi||^2_{2,\pi}=\sum_{i\in[d]}\frac{\mu(i)^2}{\pi(i)}.$$

Finally, if $A$ is a square matrix of size $d$ then the \emph{infinity norm of} $A$ is defined by $||A ||_\infty=\max_{i\in[d]}\sum_{j\in[d]}|A(i,j)|.$

\subsection{The \texorpdfstring{$\alpha$}-lazy version of a Markov chain}\label{sec; 4}

While Theorems \ref{thm; i1}, \ref{thm; i2} and \ref{thm; i3} are stated for arbitrary maps $\varphi\colon\Mirr\to\Mirr$, we are mainly interested in the case $\varphi = \lazy_\alpha$ where $\lazy_\alpha\colon \Mirr\to\Mirr$ is defined by $$\lazy_\alpha(M)=\alpha I+(1-\alpha)M, \;\;\forall M\in\Mirr.$$
Let us recall some of the properties of $\lazy_\alpha$. To this end fix $M\in\Mirr$ with stationary distribution $\pi$:
\begin{enumerate}
    \item [(1)] $\lazy_\alpha(\Mirr)\subseteq\Merg$. \item [(2)] $\lazy_\alpha$ is injective and is not surjective.
    \item [(3)] The stationary distribution of $\lazy_\alpha(M)$ is also $\pi$. In particular, $\pi_\star(\lazy_\alpha(M))=\pi_\star(M)$.
    \item [(4)] There exist $\pistar,\gammaps\in(0,1)$ such that $M\in\lazy_\alpha^{-1}(\M_{\pistar,\gammaps})$.
    \item [(5)] If $M$ is reversible, then so is $\lazy_\alpha(M)$.

\item [(6)] It is possible to simulate $\lazy_\alpha(M)$ even when one only has access  to a black box that generates a single long trajectory from $M$. Indeed, independently in each step, we toss a biased coin that comes out head with probability $1-\alpha$. If the coin shows head, we move one step along the trajectory from $M$. Otherwise, we insert a duplicate of the current state at the position immediately after the current position and set it to be the new current position.
\end{enumerate}

\section{Main results}\label{sec; b}

Definition \ref{def; 1873}, together with its analogue - Definition \ref{def; 222} - provide the basis for this work. The reader might find it useful to keep the following story in mind while reading them: Suppose we are interested in a certain parameter of irreducible Markov chains, e.g. their minimum stationary probability. Let $\Theta$ denote the set of all values that this parameter may take, e.g. $(0,1/2]$. Denote by $\theta(M)$ the value that the parameter takes for $M\in\Mirr$. Let $\hat{\theta}$ be an estimator that upon receiving a trajectory from an unknown $M\in\Mirr$, outputs an estimation $\hat{\theta}(M)$ of $\theta(M)$ whose ``quality" is measured by a distance function $\rho$. Suppose we know that for some class $\M^\circ\subseteq\Mirr$ the estimator needs not more than $m_0\in\N$ samples in order to output a ``good" estimate with ``high" probability. Now, suppose we want to use this estimator for Markov chains not belonging to $\M^\circ$. If there is a map $\varphi$ into $\M^\circ$, the following recipe might work: Given $M\in\Mirr$, apply $\hat{\theta}$ on $\varphi(M)\in\M^\circ$ and then pull back the estimation $\hat{\theta}(\varphi(M))$ to an estimation of $\theta(M)$.



\subsection{Estimation problems}\label{sec; est}

In this section we define Markov chains estimation problems, their solutions and extendibility of solutions. We then prove Theorems \ref{thm; i1} and \ref{thm; i2} and apply these on the results of \citet{wolfer2019estimating, wolfer2021statistical}.

\begin{definition}\label{def; 1873}
Let $\Theta$ be a set and $\theta\colon\Mirr\to\Theta$. Let $\rho\colon\Theta\times\Theta\to\R_+$ be a distance function and  $\M^\circ\subseteq\Mirr$. We refer to $(\Theta, \rho,\M^\circ)$ as a \emph{Markov chains estimation problem}.
An \emph{estimator} is a sequence $\left(\hat{\theta}_m\right)_{m\in\N}$ such that $\hat{\theta}_m\colon[d]^m\to\Theta$ for every $ m\in\N$. Now, let $m\in\N, M\in \Mirr, \mu\in\Delta_d, \eps, \delta
\in(0,1)$ and $\hat{\theta}$ an estimator. We write $\hat{\theta}_m(M)$ for the output of the estimator $\hat{\theta}_m$ upon receiving a sequence $(X_1, \ldots,X_m)\sim(M,\mu)$. Define the \emph{minimax risk}   
$$\mathcal{R}_m(\hat{\theta}, \eps, \M^\circ, \mu) =  \sup_{M \in \M^\circ} \PR{\rho(\hat{\theta}_m(M), \theta(M)) \geq \eps} $$ 
and the \emph{sample complexity} 
$$
m_0(\hat{\theta}, \eps,\delta, \M^\circ, \mu) = \inf \set{m \in \N\mid \mathcal{R}_m(\hat{\theta},\eps, \M^\circ,\mu) < \delta }.$$
An estimator $\hat{\theta}$ is a \emph{solution to $(\Theta, \rho, \M^\circ)$} if $m_0(\hat{\theta}, \eps, \delta, \M^\circ,\mu)<\infty$ for every $\eps,\delta\in(0,1)$. Finally, let $\varphi\colon\Mirr\to\Mirr$. We say that a solution $\hat{\theta}$ to $(\Theta, \rho, \M^\circ,\mu)$ \emph{extends to $\varphi^{-1}(\M^\circ)$} if there exists $g\colon\Theta\to\Theta$ such that $g\circ\hat{\theta}\circ\varphi$ is a solution to
 $(\Theta, \rho, \varphi^{-1}(\M^\circ),\mu)$.
\end{definition}


\paragraph{Proof of Theorem \ref{thm; i1}}

Let $\hat{\theta}$ be a  solution to $(\Theta, \rho, \M^\circ)$. We need to show that $g\circ\hat{\theta}\circ\varphi$ is a solution to $(\Theta, \rho, \varphi^{-1}(\M^\circ))$. To this end, let $\eps,\delta\in(0,1)$. For every $m\in\N$ it holds 
\begin{align}
    \mathcal{R}_{m}(\hat{\theta},\ell(\eps),\M^{\circ}, \mu)=&\sup_{M\in\M^{\circ}}\PP(\rho(\hat{\theta}_m(M),\theta(M))\geq\ell(\eps))\nonumber\\ \geq&\sup_{M\in\varphi^{-1}(\M^{\circ})}\PP(\rho(\hat{\theta}_m(\varphi(M)),\theta(\varphi(M)))\geq\ell(\eps))\nonumber\\ \geq & \sup_{M\in\varphi^{-1}(\M^{\circ})}\PP(\rho(g(\hat{\theta}_m(\varphi(M))),\theta(M))\geq\eps)\nonumber \\ = & \mathcal{R}_{m}(g\circ\hat{\theta}\circ\varphi,\eps,\varphi^{-1}(\M^{\circ}),\mu)\nonumber.
\end{align} It follows that $$\mathcal{R}_{m}(\hat{\theta},\ell(\eps),\M^{\circ},\mu)<\delta \Longrightarrow \mathcal{R}_{m}(g\circ\hat{\theta}\circ\varphi,\eps,\varphi^{-1}(\M^{\circ}),\mu)<\delta.$$ Therefore, since
$m_0(\hat{\theta},\ell(\eps),\delta,\M^\circ,\mu)<\infty$, then also $$m_0(g\circ\hat{\theta}\circ\varphi,\eps,\delta,\varphi^{-1}(\M^\circ),\mu)<\infty.$$
\qed

\begin{example}\label{ex; 722}
\begin{enumerate}
    \item [(a)] In \citet[Theorem 5.1]{wolfer2019estimating} a procedure for estimating the minimum stationary probability of an ergodic Markov chain (in relative error) was constructed, thus, in our terminology, a solution to the following Markov chain estimation problem: Let $\pistar, \gammaps \in (0,1)$ and $ \M^\circ = \M_{\pistar, \gammaps}$. Define
$$ \Theta = (0,1/2],\; \theta(M) = \pistar(M), \;\;\forall M\in\Mirr, \;\rho(x, y) = \left|\frac{x}{y}-1\right|, \;\;\forall x,y\in\Theta.$$ We shall now show that any solution to $(\Theta, \rho, \M^\circ)$ extends to $\lazy_\alpha^{-1}(\M^\circ)$. To this end, take $g=\id_\Theta$ and $\ell=\id_{(0,1)}$. Then $g$ and $\ell$ satisfy condition (\ref{eq; 6}). Indeed, let $x\in\Theta, M\in\lazy_\alpha^{-1}(\M^\circ)$ and $\eps\in (0,1)$ such that $ \left|\frac{x}{\pistar(\lazy_\alpha(M))}-1\right|<\eps$. Since $\pistar(\lazy_\alpha(M)) = \pistar(M)$ we also have $\left|\frac{x}{\pistar(M)}-1\right|<\eps$.

\item [(b)] In \citet[Theorem 3.1]{wolfer2021statistical} a procedure for estimating the transition matrix of an ergodic Markov chain was constructed, thus, a solution to the following Markov chains estimation problem (actually, Wolfer and Kontorovich did not make sure that the output of their estimator is ergodic, or even irreducible, but only a row-stochastic matrix. Nevertheless, this is easily achieved as we shall see immediately): Let $\pistar, \gammaps \in (0,1)$ and $\M^\circ = \M_{\pistar, \gammaps}$. Define
\begin{align}\Theta = & \Mirr,\;\theta(M) = M,\;\; \forall M\in\Mirr, \nonumber\\ \rho(M_1, M_2) = & ||M_1-M_2||_\infty,\;\;\forall M_1,M_2\in\Theta.\nonumber\end{align}
We shall now show that any solution to $(\Theta, \rho, \M^\circ)$ extends to  $\lazy_\alpha^{-1}(\M^\circ)$.  Defining $\ell\colon(0,1)\to(0,1)$ is straightforward: $$\ell(\eps)=\frac{(1-\alpha)\eps}{2},\; \; \forall\eps\in(0,1).$$
To justify the definition of $g\colon\Theta\to\Theta$ that we subsequently give, let us notice that for $M\in\lazy_\alpha^{-1}(\M^\circ)$, a solution $\hat{\theta}$ to $(\Theta,\rho,\M^\circ)$ gives an estimation $\hat{M}$ of $\lazy_\alpha(M)$, and $\hat{M}$ is a row-stochastic matrix. But it might happen that $\hat{M}\not\in\lazy_\alpha(\Mirr)$. Thus, applying $\lazy_\alpha^{-1}$ to $\hat{M}$ might lead to negative entries on the main diagonal and therefore to a matrix which is not row-stochastic. To mitigate this, one possibility is to project $\lazy_\alpha^{-1}(\hat{M})$ onto $\M_d$ with respect to $||\cdot||_\infty$ which is equivalent to projecting each of $\lazy_\alpha^{-1}(\hat{M})$'s rows on $\Delta_d$ with respect to the $\ell_1$ norm. This leads to a convex optimization problem whose details we give in \ref{app; 1} in the appendix. It remains to make sure that the row-stochastic matrix is irreducible. To this end denote by $\mathbf{1}$ the square matrix of size $d$ that has all entries equal to $1$ and for $N\in\M_d$ and $\beta\in(0,1)$ let $N_{\mathbf{1},\beta}=\frac{\beta}{d}\mathbf{1}+(1-\beta) N$. Clearly, $N_{\mathbf{1},\beta}$ is an irreducible (even ergodic) Markov chain. Furthermore, \begin{equation}\label{eq; 99}||N_{\mathbf{1},\beta}-N||_\infty=\beta\left|\left|\frac{1}{d}\mathbf{1}-N\right|\right|_\infty\leq 2\beta.\end{equation} Thus, for $M\in\Theta$ we define $$g(M)=\left(P_{\M_d} \left(\lazy_\alpha^{-1}(M)\right)\right)_{\mathbf{1},\eps/4}$$ where by $P_{\M_d} \left(\lazy_\alpha^{-1}(M)\right)$ we mean that the procedure from Lemma \ref{lem; 567} is applied independently to each row of $\lazy_\alpha^{-1}(M)$.
Then $g$ and $\ell$ satisfy condition of (\ref{eq; 6}). Indeed, let $\hat{M}\in\Theta, M\in\lazy_\alpha^{-1}(\M^\circ)$ and $\eps\in (0,1)$ such that $ ||\hat{M}-\lazy_\alpha(M)||_\infty<\frac{(1-\alpha)\eps}{2}$. This inequality is equivalent to $||\lazy_\alpha^{-1}(\hat{M})-M||_\infty<\frac{\eps}{2}.$ By Lemma \ref{lem; 567}, inequality (\ref{eq; 99}) and the triangle inequality, $||g(\hat{M})-M||_\infty<\eps$.
\end{enumerate}
\end{example}

\begin{remark}
Our definition of a solution to a Markov chains estimation problem only assumes that the sample complexity is finite and therefore sheds no light on the extent to which the sample complexity might change while extending a solution. We postpone the treatment of this  aspect to Section \ref{sec; d}.
\end{remark}

We now come to the 

\paragraph{Proof of Theorem \ref{thm; i2}}

Let $\hat{\theta}$ be a solution to $(\Theta, \rho, \M^\circ)$. By assumption, $$R:=\rho(\theta(M_1),\theta(M_2))>0.$$ Then there exists $\sigma\in(0,1)$ such that if $x,y\in\Theta$ satisfy $\rho(x,y)<\sigma$, then $\rho(g(x), g(y))<R/2$. Assume $g\circ\hat{\theta}\circ\varphi$ is a solution to $(\Theta, \rho, \varphi^{-1}(\M^\circ))$. Thus, for   $m\in\N$ large enough, we have $$\mathcal{R}_m\left(\hat{\theta}, \frac{\sigma}{2}, \M^\circ,\mu\right)<\frac{1}{4}\;\;\; \textnormal{and}\;\;\; \mathcal{R}_m\left(g\circ\hat{\theta}\circ\varphi, \frac{R}{4}, \varphi^{-1}(\M^\circ),\mu\right)<\frac{1}{4}.$$ It follows that, with probability at least $1/2$,
$$ \rho\left(\hat{\theta}_m(\varphi(M_1)), \hat{\theta}_m(\varphi(M_2))\right)<\sigma.$$ By the assumption on $g$, with probability at least $1/2$, $$\rho\left(g(\hat{\theta}_m(\varphi(M_1))), g(\hat{\theta}_m(\varphi(M_2)))\right)<R/2.$$
On the other hand, with probability at least $1/2$, for $i=1,2$:
$$\rho(g(\hat{\theta}_m(\varphi(M_i))), \theta(M_i)))< R/4.$$ This means that, with probability at least $1/2$,
$$\rho\left(g(\hat{\theta}_m(\varphi(M_1))), g(\hat{\theta}_m(\varphi(M_2)))\right)>R/2.$$ A contradiction.
\qed

\begin{example}\label{ex; 11}
In \citet[Theorem 8.1]{wolfer2019estimating} a procedure for estimating the pseudo spectral gap (in absolute error) of an ergodic Markov chain was constructed, thus, a solution to the following Markov chain estimation problem: Let $\pistar, \gammaps \in (0,1)$ and $ \M^\circ = \M_{\pistar, \gammaps}$. Define
$$ \Theta = [0,1],\;\theta(M) = \gammaps(M),\;\;\forall M\in\Mirr, \;\rho(x, y) = |x-y|, \;\;\forall x,y\in\Theta.$$ We claim that no solution $\hat{\theta}$ to  $(\Theta, \rho, \M^\circ)$ extends to $\lazy_{1/2}^{-1}(\M^\circ)$ with $g$ continuous (together with the compactness of $\Theta$ this means that $g$ is uniformly continuous). Indeed, by Theorem \ref{thm; i2}, it suffices to find $M_1, M_2 \in \lazy_{1/2}^{-1}(\M^\circ)$ such that $\gammaps(\lazy_{1/2}(M_1))=\gammaps(\lazy_{1/2}(M_2))$ but $\gammaps(M_1)\neq\gammaps(M_2)$. Consider $$ M_1 = \begin{pmatrix}
0 & 1 & 0\\
0.5 & 0 & 0.5 \\
0 & 1 & 0
\end{pmatrix},\; M_2 = \begin{pmatrix}
0.5 & 0.2 & 0.3\\
0.5 & 0.2 & 0.3\\
0.5 & 0.2 & 0.3
\end{pmatrix}.$$
Then $\gammaps(\lazy_{1/2}(M_1))=0.75= \gammaps(\lazy_{1/2}(M_2))$ but $\gammaps(M_1)=0\neq 1= \gammaps(M_2)$. To see that, first notice that $M_1$ is periodic and therefore $\gammaps(M_1)=0$ (cf. \citet[p. 11]{paulin2015concentration}). Let $\pi_i$ denote the stationary distribution of $M_i, i=1,2$. It holds
$$\pi_1 = (0.25, 0.5, 0.25) \text{ and } \pi_2 = (0.5, 0.2, 0.3).$$ Thus, with $Q_i = Q(M_i), i=1,2$ we have
$$Q_1 = \begin{pmatrix}
0 & 0.25 & 0\\
0.25 & 0 & 0.25 \\
0 & 0.25 & 0
\end{pmatrix}, 
Q_2 = \begin{pmatrix}
0.25 & 0.1 & 0.15\\
0.1 & 0.04 & 0.06\\
0.15 & 0.06 & 0.09
\end{pmatrix}.$$
Conclude that both $M_1$ and $M_2$ are reversible and therefore also $\lazy_{1/2}(M_1)$ and $\lazy_{1/2}(M_2)$. By \citet[p. 18]{wolfer2021statistical}, for ergodic and reversible Markov chains, the maximum in the definition of the pseudo spectral gap is obtained at $k=1$. Thus, for $M\in\{\lazy_{1/2}(M_1), M_2, \lazy_{1/2}(M_2)\}$: $$\gammaps(M)=\gamma(M^2).$$ The claim follows now by calculating the second largest eigenvalue of $M^2$ in each of the three cases. \end{example}

\subsection{Identity testing problems}\label{sec; 5}

In this section we define Markov chains identity testing  problems, their solutions and extendibility of solutions. We then apply Theorem \ref{thm; i3} on the results of \citet[Theorem 4.1]{wolfer2020minimax}.

\begin{definition}\label{def; 222}
Let $\Theta$ be a  set and  $\theta\colon \Mirr\to\Theta$. Let $\Theta\times\Theta\to\mathbb{R}_+$ be a distance function and $\M^\circ,\overline{\M^\circ}\subseteq\Mirr$. We refer to $(\Theta,\rho,\M^\circ,\overline{\M^\circ})$ as a Markov chains identity testing problem. An identity tester is a sequence  $\left(\hat{\tau}_m\right)_{m\in\N}$ such that $\hat{\tau}_m\colon[d]^m\times\M_d \to\{0,1\}$ for every $m\in\N$. Now, let $m\in\N, M, \overline{M}\in \Mirr, \mu\in\Delta_d, \eps, \delta
\in(0,1)$ and $\hat{\tau}$ an identity tester. We will write $\hat{\tau}_m(M, \overline{M})$ for the output of the identity tester $\hat{\tau}_m$ upon receiving a sequence $(X_1,\ldots,X_m)\sim(M, \mu)$ and $\overline{M}$. Define the \emph{minimax risk}   
$$
\mathcal{R}_m(\hat{\tau},\eps, \M^\circ,\overline{\M^\circ}, \mu)= \sup_{\substack{M\in\M^\circ,\\\overline{M} \in \overline{\M^\circ}}} \PR{ \begin{cases}
\hat{\tau}_m(M,\overline{M})\neq0, & \text{if } \theta(M)=
\theta(\overline{M})\\
&\text{or}\\
\hat{\tau}_m(M,\overline{M})\neq1, & \text{if } \rho(\theta(M),\theta(\overline{M}))>\eps\end{cases}}$$
and the \emph{sample complexity} 
$$
m_0(\hat{\tau}, \delta, \eps, \M^\circ,\overline{\M^\circ},\mu) = \inf\set{m \in \N\mid \mathcal{R}_m(\hat{\tau},\eps, \M^\circ,\overline{\M^\circ},\mu) < \delta }.$$ An identity tester $\hat{\tau}$ is a \emph{solution to $(\Theta, \rho, \M^\circ, \overline{\mathcal{M}^\circ})$} if $$m_0(\hat{\tau}, \delta, \eps, \mathcal{M}^\circ, \overline{\mathcal{M}^\circ},\mu)<\infty$$ for every $\eps,\delta\in(0,1)$.

Let $\varphi\colon\M_d\to\M_d$ such that $\varphi^{-1}(\M^\circ),\varphi^{-1}(\overline{\M^\circ})\subseteq U$ . We say that a solution $\hat{\tau}$ to $(\Theta, \rho, \M^\circ, \overline{\mathcal{M}^\circ})$ \emph{extends to $(\varphi^{-1}(\M^\circ), \varphi^{-1}(\overline{\M^\circ}))$} if $\hat{\tau}\circ\varphi$ is a solution to
$(\Theta, \rho, \varphi^{-1}(\M^\circ), \varphi^{-1}(\overline{\mathcal{M}^\circ})).$
\end{definition}

The proof of Theorem \ref{thm; i3} is similar to the proof of Theorem \ref{thm; i1} and is omitted.

\begin{example}\label{ex; 198} In \citet[Theorem 4.1]{wolfer2020minimax} a procedure for identity testing of Markov chains transition matrices was constructed, thus, a solution to the following Markov chains identity testing problem: Let $\pistar, \gammaps \in (0,1)$ and $\M^\circ,\overline{\M^\circ}=\M_{\pistar,\gammaps}$. Define
\begin{align}
\Theta = \Mirr, \theta(M)=&M, \;\;\forall M\in\Mirr, \nonumber\\ \rho(M_1, M_2) =& ||M_1-M_2||_\infty, \;\;\forall M_1,M_2\in\Theta.\nonumber\end{align}
Then any solution to $(\Theta, \rho, \M^\circ, \overline{\M^\circ})$ extends to $(\varphi^{-1}(\M^\circ), \varphi^{-1}(\overline{\M^\circ}))$. Indeed, $\ell\colon(0,1)\to(0,1)$ given by $$\ell(\eps)=(1-\alpha)\eps,\; \; \forall\eps\in(0,1)$$ 
obviously satisfies condition (\ref{eq; 50}) of Theorem \ref{thm; i3}.
\end{example}

\section{The cost of moving to an \texorpdfstring{$\alpha$}{1}-lazy version of a Markov chain}\label{sec; d}

In the previous section we were solely interested in the feasibility of extending solutions to Markov chains estimation and testing problems. In this section we wish to highlight that extending a solution might result in an increase in the sample complexity of the problem. In particular, we give the exact statements of the problems whose solutions we extended.

\begin{remark}\label{rem; 121}
Let $M\in\M_d$. It follows from the way the Markov chain $\lazy_\alpha(M)$ is simulated 
(cf. (6) of Section \ref{sec; 4}) that in order to obtain the actual number of samples coming from $M$, the length of the simulated Markov chain should be multiplied by approximately $1-\alpha$. In order to make this observation more quantitative, let us denote by $m_\text{act}$ the number of samples coming from $M$. Thus, if we simulate $\lazy_\alpha(M)$ for $m\in\N$ steps, then $m_\text{act}\sim\text{Binomial}(m, 1-\alpha)$. In particular, $\mathbb{E}[m_\text{act}]=(1-\alpha)m$ and for every $\rho>0$, by Hoeffding's inequality,  $$\PP\left(m_\text{act}\geq (1-\alpha+\rho)m\right)\leq e^{-2\rho^2 m}.$$
\end{remark}

\subsection{The sample complexity of some extended solutions}\label{sec; 22}

The bounds given in this section are for the simulated $\alpha$-lazy Markov chain. The  observations made in Remark \ref{rem; 121} may be used to calculate the actual sample size from the original Markov chain.

\begin{lemma}[Example \ref{ex; 722}(a)]\label{lem; 648}
 Let $M\in\Mirr, \eps, \delta\in(0,1)$ and $\mu\in\Delta_d$. Then there exist a universal constant $c$ and an estimator $\hat{\theta}$ for $\pi_\star(M)$ such that if $$m\geq\frac{c}{(1-\alpha)^2\eps^2\pi_\star(M)}\frac{\ln\frac{1}{\pi_\star(M)\delta}}{\gammaps(\lazy_\alpha(M))}$$ then, upon receiving a sequence $(X_1, \ldots,X_m)\sim(M,\mu)$, it holds $$\left|\frac{\hat{\theta}(M)}{\pi_\star(M)}-1\right|<\eps,$$ with probability at least $1-\delta$.
\end{lemma}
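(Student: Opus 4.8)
The plan is to exhibit $\hat\theta$ as the composition prescribed by Example~\ref{ex; 722}(a): simulate a trajectory of the ergodic chain $\lazy_\alpha(M)$ from the given trajectory of $M$, run on it the estimator of $\pi_\star$ for ergodic chains constructed in \citet[Theorem~5.1]{wolfer2019estimating}, and return its output unchanged (the post\nobreakdash-processing map is $g=\id$, which is legitimate precisely because $\pi_\star(\lazy_\alpha(M))=\pi_\star(M)$). The only genuinely new work compared with Example~\ref{ex; 722}(a) is quantitative: translating the length guarantee that \citet{wolfer2019estimating} phrase in terms of a trajectory of $\lazy_\alpha(M)$ into a guarantee phrased in terms of a trajectory of $M$ of length $m$.

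First I would record the quantitative content of \citet[Theorem~5.1]{wolfer2019estimating}: there are a universal constant $c_0$ and an estimator $\hat\theta'$ such that, for every ergodic $M'$ and all $\eps,\delta\in(0,1)$, a trajectory of $M'$ of length at least $\frac{c_0}{\gammaps(M')\,\pi_\star(M')\,\eps^2}\ln\frac{1}{\pi_\star(M')\delta}$ makes $\hat\theta'$ output an estimate of $\pi_\star(M')$ with relative error below $\eps$ with probability at least $1-\delta$. Then I would check that $\lazy_\alpha(M)$ is an admissible input: it is irreducible because $\lazy_\alpha(M)^t(i,j)=\sum_{k=0}^t\binom{t}{k}\alpha^{t-k}(1-\alpha)^k M^k(i,j)\ge (1-\alpha)^t M^t(i,j)$, and aperiodic because each diagonal entry is at least $\alpha>0$, hence ergodic; by Remark~\ref{rem; 155}(a)(2) it satisfies $\pi_\star(\lazy_\alpha(M))=\pi_\star(M)$, and $\gammaps(\lazy_\alpha(M))>0$ as for any ergodic chain. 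So $\hat\theta'$ applied to a sufficiently long trajectory of $\lazy_\alpha(M)$ estimates $\pi_\star(M)$ in relative error.

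The core of the proof is the sample\nobreakdash-count translation, and this is where the $(1-\alpha)$ factors appear. By Remark~\ref{rem; 121}, the number of original\nobreakdash-chain transitions consumed by the coin\nobreakdash-flipping simulation of Remark~\ref{rem; 155}(b) is a $\Binomial(\cdot,1-\alpha)$ variable, so a trajectory of $M$ of length $m$ yields of order $m/(1-\alpha)$ steps of $\lazy_\alpha(M)$, and Hoeffding's inequality bounds the probability that it yields fewer than any prescribed number $n$ of them. I would take $n$ to be the length that \citet[Theorem~5.1]{wolfer2019estimating} requires at confidence $1-\delta/2$ with $\pi_\star(M)$ and $\gammaps(\lazy_\alpha(M))$ substituted for $\pi_\star(M')$ and $\gammaps(M')$, split $\delta$ into a "not enough simulated steps" part and a "$\hat\theta'$ errs" part, and bound each by $\delta/2$ via a union bound. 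Solving the resulting inequality for $m$, with $c$ absorbing $c_0$ together with the Hoeffding slack, yields the claimed bound $m\ge\frac{c}{(1-\alpha)^2\eps^2\pi_\star(M)}\cdot\frac{\ln(1/(\pi_\star(M)\delta))}{\gammaps(\lazy_\alpha(M))}$.

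The hard part will be exactly this bookkeeping: making the (random) length of the simulated lazy trajectory and the accuracy/confidence parameters that $\hat\theta'$ demands on it mesh with the $\Binomial$ thinning so that the two failure events together have probability at most $\delta$, while the final bound on $m$ retains only a logarithmic dependence on $1/\delta$ and a clean polynomial dependence on $1/(1-\alpha)$. By contrast, the qualitative ingredients — which estimator to use, that $g=\id$ works, and that $\lazy_\alpha(M)$ is ergodic with $\pi_\star$ and a positive pseudo spectral gap — are already supplied by Example~\ref{ex; 722}(a) and Remark~\ref{rem; 155}.
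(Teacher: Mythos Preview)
The paper does not give a standalone proof: the lemma is stated as the quantitative counterpart of Example~\ref{ex; 722}(a), and the preamble to Section~\ref{sec; 22} declares that ``the bounds given in this section are for the simulated $\alpha$-lazy Markov chain,'' with Remark~\ref{rem; 121} left to handle the conversion to actual $M$-samples. Your plan --- simulate $\lazy_\alpha(M)$, run the estimator of \citet[Theorem~5.1]{wolfer2019estimating} on that trajectory, post-process with $g=\id$ because $\pi_\star(\lazy_\alpha(M))=\pi_\star(M)$ --- is exactly that intended argument spelled out.

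The one place your sketch does not hold together is the bookkeeping you single out as ``the hard part.'' You correctly observe that $m$ samples of $M$ produce of order $m/(1-\alpha)$ steps of $\lazy_\alpha(M)$; but then solving $m/(1-\alpha)\gtrsim n$, with $n$ the length required by \citet[Theorem~5.1]{wolfer2019estimating} applied to $\lazy_\alpha(M)$, gives $m\gtrsim(1-\alpha)n$, i.e.\ a factor of $1-\alpha$ in the \emph{numerator}. Since here $\ell=\id$ (in contrast to Example~\ref{ex; 722}(b), where $\ell(\eps)=(1-\alpha)\eps$ is what legitimately produces a $(1-\alpha)^{-2}$), the binomial thinning alone cannot supply the $(1-\alpha)^{-2}$ appearing in the displayed bound, and your closing claim that ``solving the resulting inequality for $m$ \dots\ yields the claimed bound'' is not borne out by the arithmetic you describe. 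If the $(1-\alpha)^{-2}$ is simply slack, your argument proves the lemma a fortiori and you should say so explicitly; if it has some other origin in the cited theorem, that source needs to be identified rather than attributed to the simulation step.
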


\begin{lemma}[Example \ref{ex; 722}(b)]\label{lem; 649}
Let $M\in\Mirr,\eps, \delta\in(0,1)$ and $\mu\in\Delta_d$. Let $\pi$ be the stationary distribution of $M$. Then there exist a universal constant $c$ and an estimator $\hat{\theta}$ for $M$ such that if $$m\geq c\max\left\{\frac{1}{(1-\alpha)^2\eps^2\pi_\star(M)}\max\left\{d,\ln\frac{1}{(1-\alpha)\eps\delta}\right\}, 
\frac{\ln\frac{d||\mu/\pi||_{2,\pi}}{\delta}}{\gammaps(\lazy_\alpha(M))\pi_\star(M)}\right\}$$ then, upon receiving a sequence $(X_1, \ldots,X_m)\sim(M,\mu)$, it holds $$||\hat{\theta}(M)-M||_\infty<\eps,$$ with probability at least $1-\delta$.
\end{lemma}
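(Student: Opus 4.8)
The plan is to invoke the original sample complexity guarantee of \citet[Theorem 3.1]{wolfer2021statistical} for the ergodic chain $\lazy_\alpha(M)$ and then translate it back through the machinery of Example \ref{ex; 722}(b). First I would note that by Remark \ref{rem; 155}(a)(3), $M\in\lazy_\alpha^{-1}(\M_{\pistar,\gammaps})$ for suitable parameters; more precisely, by item (2) of the same remark, $\pi_\star(\lazy_\alpha(M))=\pi_\star(M)$, and I set $\gammaps'=\gammaps(\lazy_\alpha(M))$, which is positive since $\lazy_\alpha(M)$ is ergodic. So $\lazy_\alpha(M)\in\M_{\pi_\star(M),\gammaps'}$, and the known estimator $\hat\theta'$ for transition matrices, applied to a trajectory drawn from $\lazy_\alpha(M)$, achieves $\|\hat\theta'(\lazy_\alpha(M))-\lazy_\alpha(M)\|_\infty<\eta$ with probability $1-\delta$ as soon as the trajectory length exceeds the quantity from \citet[Theorem 3.1]{wolfer2021statistical}, namely (up to the universal constant) $\max\{\tfrac{1}{\eta^2\pi_\star(M)}\max\{d,\ln\tfrac{1}{\eta\delta}\},\tfrac{1}{\gammaps'\pi_\star(M)}\ln\tfrac{d\|\mu/\pi\|_{2,\pi}}{\delta}\}$. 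Here I must be slightly careful about the initial distribution: a trajectory of $M$ started from $\mu$ simulates a trajectory of $\lazy_\alpha(M)$ also started from $\mu$ (the lazy coin-flip construction of Remark \ref{rem; 155}(b) does not touch $X_1$), and $\lazy_\alpha(M)$ shares the stationary distribution $\pi$ of $M$, so the warm-start factor $\|\mu/\pi\|_{2,\pi}$ is unchanged.

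Next I would set $\eta=(1-\alpha)\eps=\ell(\eps)$, exactly the choice of $\ell$ from Example \ref{ex; 722}(b), and define $\hat\theta=g\circ\hat\theta'$ with $g(N)=P_{\M_d}(\lazy_\alpha^{-1}(N))$ the row-wise simplex projection from Lemma \ref{lem; 567}. The implication chain established in Example \ref{ex; 722}(b) then gives: $\|\hat\theta'(\lazy_\alpha(M))-\lazy_\alpha(M)\|_\infty<(1-\alpha)\eps$ is equivalent to $\|\lazy_\alpha^{-1}(\hat\theta'(\lazy_\alpha(M)))-M\|_\infty<\eps$ (scaling by $\tfrac{1}{1-\alpha}$), and since $M$ itself lies in $\M_d$, Lemma \ref{lem; 567} yields that the projection only improves the estimate, so $\|\hat\theta(M)-M\|_\infty<\eps$ with probability $1-\delta$. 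Substituting $\eta=(1-\alpha)\eps$ into the threshold from the previous paragraph produces exactly the claimed bound, with $\tfrac{1}{(1-\alpha)^2\eps^2\pi_\star(M)}$ in front of $\max\{d,\ln\tfrac{1}{(1-\alpha)\eps\delta}\}$ and the second term $\tfrac{1}{\gammaps(\lazy_\alpha(M))\pi_\star(M)}\ln\tfrac{d\|\mu/\pi\|_{2,\pi}}{\delta}$ untouched; absorbing any constants into $c$ completes the argument.

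The main obstacle, such as it is, is bookkeeping rather than mathematics: one must make sure the precise hypotheses of \citet[Theorem 3.1]{wolfer2021statistical} are met — in particular that $\lazy_\alpha(M)$ genuinely belongs to the class $\M_{\pi_\star(M),\gammaps(\lazy_\alpha(M))}$ (ergodicity of $\lazy_\alpha(M)$ for $\alpha\in(0,1)$ and $M$ irreducible follows since laziness kills periodicity while preserving irreducibility, cf. Remark \ref{rem; 155}(a)) — and that the $\ell_\infty$-to-$\ell_\infty$ scaling under $\lazy_\alpha^{-1}$ and the projection step do not degrade the accuracy. None of these steps is delicate; the content of the lemma is really that the generic extension recipe of Lemma \ref{lem; 1212}, instantiated with the explicit $g$ and $\ell$ of Example \ref{ex; 722}(b), carries the quantitative sample-complexity bound across unchanged except for the explicit $(1-\alpha)$ factors and the replacement of $\gammaps(M)$ by $\gammaps(\lazy_\alpha(M))$.
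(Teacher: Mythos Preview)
Your proposal is correct and follows exactly the approach the paper intends: the lemma is stated in the paper without an explicit proof because it is meant to follow directly from instantiating the extension recipe of Example~\ref{ex; 722}(b) (with $\ell(\eps)=(1-\alpha)\eps$ and $g=P_{\M_d}\circ\lazy_\alpha^{-1}$) on top of \citet[Theorem~3.1]{wolfer2021statistical}, and you have carried out precisely that substitution with the requisite bookkeeping about $\pi_\star$, $\|\mu/\pi\|_{2,\pi}$, and the projection step via Lemma~\ref{lem; 567}.
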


\begin{lemma}[Example \ref{ex; 198}]
Let $\eps, \delta \in (0,1),M,\overline{M}\in\Mirr$ and $\mu\in\Delta_d$. There exist a universal constant $c$ and an identity tester $\hat{\tau}$
such that if 
$$m\geq \frac{c}{\pi_\star(\overline{M})}\max\left\{\frac{\sqrt{d}}{(1-\alpha)^2\eps^2}\ln\frac{d}{\delta(1-\alpha)\eps}, 
\tmix(\lazy_\alpha(\overline{M}))\ln\frac{d}{\delta\pi_\star(\overline{M})}\right\}$$ 
then, upon receiving a sequence $(X_1, \ldots,X_m)\sim(M,\mu)$ and $\overline{M}$, it holds
\begin{align}
M = \overline{M} & \Longrightarrow \hat{\tau}(M, \overline{M})=0\nonumber \\
||M - \overline{M}||_\infty>\eps & \Longrightarrow \hat{\tau}(M, \overline{M})=1, \nonumber
\end{align} with probability at least $1-\delta$.
\end{lemma}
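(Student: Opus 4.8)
The plan is to transport the known single-trajectory identity-testing guarantee of \citet[Theorem 4.1]{wolfer2020minimax} through the laziness map $\lazy_\alpha$, exactly along the lines of Example \ref{ex; 198} and Lemma \ref{lem; 1213}, but now tracking the sample complexity explicitly rather than merely its finiteness. First I would recall the precise statement of the cited theorem: for a reference chain $\overline{N}\in\M_{\pistar,\gammaps}$ and an unknown $N\in\Mirr$, there is a tester that, on a trajectory of length $m$ from $N$, distinguishes $N=\overline{N}$ from $\nrm{N-\overline{N}}_\infty>\eps$ with confidence $1-\delta$ provided $m$ exceeds a bound of the form $\frac{c}{\pistar(\overline{N})}\max\{\frac{\sqrt d}{\eps^2}\ln\frac{d}{\delta\eps},\,\tmix(\overline{N})\ln\frac{d}{\delta\pistar(\overline{N})}\}$ (or whatever the exact shape is in that reference; the proof only needs its literal form).

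Next I would set $\overline{N}=\lazy_\alpha(\overline{M})$ and $N=\lazy_\alpha(M)$, and define the tester $\hat\tau$ of the present lemma as: simulate a length-$m$ trajectory of $\lazy_\alpha(M)$ from the given trajectory of $M$ (via the coin-flipping device of Remark \ref{rem; 155}(b)), then feed it together with $\lazy_\alpha(\overline{M})$ into the tester of \citet{wolfer2020minimax}. The correctness of this reduction is precisely the content of Lemma \ref{lem; 1213} with $\varphi=\lazy_\alpha$, $\theta=\id$, $\rho=\nrm{\cdot}_\infty$ and $\ell(\eps)=(1-\alpha)\eps$: the map $\lazy_\alpha$ is affine and injective, $\nrm{\lazy_\alpha(M)-\lazy_\alpha(\overline{M})}_\infty=(1-\alpha)\nrm{M-\overline{M}}_\infty$, so $M=\overline{M}$ pulls back to $\lazy_\alpha(M)=\lazy_\alpha(\overline{M})$ and $\nrm{M-\overline{M}}_\infty>\eps$ pulls back to $\nrm{\lazy_\alpha(M)-\lazy_\alpha(\overline{M})}_\infty>(1-\alpha)\eps$. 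Hence it suffices to have $m$ at least the \citet{wolfer2020minimax} bound evaluated at accuracy parameter $(1-\alpha)\eps$ and reference chain $\lazy_\alpha(\overline{M})$.

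Then I would substitute: by Remark \ref{rem; 155}(a)(2), $\pistar(\lazy_\alpha(\overline{M}))=\pistar(\overline{M})$, so every occurrence of $\pistar(\overline{N})$ becomes $\pistar(\overline{M})$; replacing $\eps$ by $(1-\alpha)\eps$ turns the $\frac{\sqrt d}{\eps^2}\ln\frac{d}{\delta\eps}$ term into $\frac{\sqrt d}{(1-\alpha)^2\eps^2}\ln\frac{d}{\delta(1-\alpha)\eps}$; and the mixing-time term becomes $\tmix(\lazy_\alpha(\overline{M}))\ln\frac{d}{\delta\pistar(\overline{M})}$. This reproduces exactly the claimed bound, up to the universal constant $c$. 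One caveat worth a sentence: $\lazy_\alpha(\overline{M})$ is automatically ergodic whenever $\overline{M}\in\Mirr$ (laziness kills periodicity), and its $\tmix$ is finite, so the hypotheses of \citet{wolfer2020minimax} are genuinely met; also the tester of that reference is usually stated for a \emph{known} reference chain, and here $\lazy_\alpha(\overline{M})$ is computable from the given $\overline{M}$, so nothing is lost.

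The only real subtlety — and the step I expect to require the most care — is the simulation-overhead bookkeeping: the trajectory we feed to the base tester has length $m$ in the $\lazy_\alpha$-clock, but the lemma is already stated in terms of that simulated length $m$, so strictly there is \emph{no} obstacle here for the lemma as phrased; the translation to the number $m_{\mathrm{act}}$ of genuine $M$-samples is exactly Remark \ref{rem; 121} and would be invoked only if one wanted the bound in the original clock. I should, however, double-check that the base tester's guarantee is insensitive to the initial distribution $\mu$ of the trajectory (the cited theorem allows arbitrary $\mu\in\Delta_d$, possibly with a $\nrm{\mu/\pi}_{2,\pi}$-type factor; if such a factor appears it must be carried along, which is why I would quote the cited bound verbatim before specializing). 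Modulo that verification, the proof is the three-line reduction above plus the substitution $\eps\mapsto(1-\alpha)\eps$, $\pistar(\overline{N})\mapsto\pistar(\overline{M})$, $\tmix(\overline{N})\mapsto\tmix(\lazy_\alpha(\overline{M}))$.
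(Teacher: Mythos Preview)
Your proposal is correct and follows exactly the route the paper intends: the lemma is stated in the paper without proof as a direct consequence of Example \ref{ex; 198} and Lemma \ref{lem; 1213}, and you have reconstructed precisely that reduction---simulate $\lazy_\alpha(M)$, apply the tester of \citet[Theorem 4.1]{wolfer2020minimax} to the pair $(\lazy_\alpha(M),\lazy_\alpha(\overline M))$, and substitute $\eps\mapsto(1-\alpha)\eps$, $\pistar(\lazy_\alpha(\overline M))=\pistar(\overline M)$, $\tmix(\overline N)=\tmix(\lazy_\alpha(\overline M))$. Your remarks on the simulation clock (Remark \ref{rem; 121}) and on ergodicity of $\lazy_\alpha(\overline M)$ are exactly the side observations the paper makes as well.
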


\subsection{Bounds on the mixing time and the pseudo spectral gap of the \texorpdfstring{$\alpha$}{1}-lazy version}

A natural question that arises is by how much the sample complexity can increase if we unnecessarily move to the $\alpha$-lazy version, i.e., the Markov chain is already ergodic. The following Lemma, that bounds the mixing time of the $\alpha$-lazy version in terms of $\alpha$ and the mixing time of the original Markov chain, seems to be folklore. We give its proof for completeness in (\ref{proof; 1}) in the Appendix.

\begin{lemma}\label{lem; 440}
Let $M\in\M_d^\textnormal{erg}$ and let $\eps\in(0,1)$. Then $$\tmix(\lazy_\alpha(M), \eps)\leq\max\left\{\frac{2\ln\frac{2}{\eps}}{(1-\alpha)^2},\frac{2}{(1-\alpha)}\tmix(M,\eps/2)\right\}.$$ In particular, $$\tmix(\lazy_\alpha(M))\leq\max\left\{\frac{5}{(1-\alpha)^2},\frac{6}{1-\alpha}\tmix(M)\right\}.$$
\end{lemma}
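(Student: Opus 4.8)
The plan is to exploit the binomial-mixture representation of the powers of the lazy chain. Write $N\eqdef\lazy_\alpha(M)=\alpha I+(1-\alpha)M$; note that $N$ is irreducible (Remark~\ref{rem; 155}(a)(5)) and aperiodic (since $N(i,i)\ge\alpha>0$), hence ergodic, so $\tmix(N,\cdot)$ is well defined, and by Remark~\ref{rem; 155}(a)(2) it has the same stationary distribution $\pi$ as $M$. Expanding the binomial, $N^t=\sum_{k=0}^t\binom{t}{k}\alpha^{t-k}(1-\alpha)^k M^k$, so for each starting state $i$ the row $N^t(i,\cdot)$ equals the mixture $\E{M^K(i,\cdot)}$ with $K\sim\Binomial(t,1-\alpha)$. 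Subtracting $\pi=\sum_{k}\binom{t}{k}\alpha^{t-k}(1-\alpha)^k\pi$ and applying the triangle inequality for $\|\cdot\|\TV$ gives, writing $d_M$ and $d_N$ for the respective $d(\cdot)$ functions,
$$ d_N(t)\;\le\;\sum_{k=0}^{t}\binom{t}{k}\alpha^{t-k}(1-\alpha)^k\,d_M(k)\;=\;\E{d_M(K)},\qquad K\sim\Binomial(t,1-\alpha). $$

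Next I would use that $d_M$ is non-increasing and bounded by $1$ (standard, cf.\ \citet{levin2017markov}). Fix $k_0\eqdef\tmix(M,\eps/2)$, so $d_M(k_0)\le\eps/2$; splitting the expectation according to whether $K\ge k_0$ gives $\E{d_M(K)}\le\PP(K<k_0)+d_M(k_0)\le\PP(K<k_0)+\eps/2$. It then remains to force $\PP(K<k_0)\le\eps/2$. If $t\ge\frac{2k_0}{1-\alpha}$ then $k_0\le\tfrac12(1-\alpha)t=\tfrac12\E{K}$, so Hoeffding's inequality yields $\PP(K<k_0)\le\PP\big(K\le\E{K}-\tfrac12(1-\alpha)t\big)\le\exp\!\big(-\tfrac12(1-\alpha)^2t\big)$, which is at most $\eps/2$ as soon as $t\ge\frac{2\ln(2/\eps)}{(1-\alpha)^2}$. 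Hence $d_N(t)\le\eps$ for every $t\ge\max\{\frac{2\ln(2/\eps)}{(1-\alpha)^2},\,\frac{2\tmix(M,\eps/2)}{1-\alpha}\}$, which is the first inequality (up to the harmless integer rounding implicit in the definition of $\tmix$).

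For the ``in particular'' part I would specialize to $\eps=1/4$. The first term becomes $\frac{2\ln 8}{(1-\alpha)^2}=\frac{6\ln 2}{(1-\alpha)^2}\le\frac{5}{(1-\alpha)^2}$. For the second term I need $\tmix(M,1/8)\le 3\,\tmix(M)$, which follows from submultiplicativity: with $\bar d_M$ the ``diameter'' version of $d_M$, one has $d_M(\ell\,\tmix(M))\le\bar d_M(\tmix(M))^{\ell}\le\big(2 d_M(\tmix(M))\big)^{\ell}\le 2^{-\ell}$, and $\ell=3$ gives $d_M(3\tmix(M))\le 1/8$. Therefore $\frac{2\tmix(M,1/8)}{1-\alpha}\le\frac{6\tmix(M)}{1-\alpha}$, and combining the two estimates yields the stated bound.

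The binomial expansion, the Hoeffding step, and the submultiplicativity input are all routine; the only point that requires a little care is the bookkeeping in the expectation split, i.e.\ choosing the two thresholds so that each of the two error contributions is $\le\eps/2$ — this is precisely what produces the $\max\{\cdot,\cdot\}$ shape of the bound. I do not anticipate any genuine obstacle.
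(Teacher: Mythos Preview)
Your proof is correct and follows essentially the same route as the paper's: both obtain the mixture representation $N^t(i,\cdot)=\E{M^K(i,\cdot)}$ with $K\sim\Binomial(t,1-\alpha)$ (the paper via a Bernoulli-thinning coupling, you via direct binomial expansion), split at $k_0=\tmix(M,\eps/2)$, apply Hoeffding with deviation $\tfrac12(1-\alpha)t$, and then invoke $\tmix(M,1/8)\le 3\,\tmix(M)$ for the ``in particular'' part. The only quibble is your reference to Remark~\ref{rem; 155}(a)(5) for the irreducibility of $N$---that item is about $\lazy_\alpha^{-1}$, not $\lazy_\alpha$; the claim you need is immediate from $N\ge(1-\alpha)M$ entrywise.
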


A similar result to Lemma \ref{lem; 440} seems to hold for the pseudo spectral gap:

\begin{conjecture}\label{con; 10} 
There exists a function $f\colon(0,1)^2\to\mathbb{R}_+$ such that for every $M\in\mathcal{M}_d^\textnormal{erg}$ it holds $\gammaps(\lazy_\alpha(M))\geq f(\alpha,\gammaps(M))$.
\end{conjecture}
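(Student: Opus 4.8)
The plan is to reduce the statement to a spectral inequality for the multiplicative reversibilization. Fix $M \in \Merg$ with stationary distribution $\pi$, and let $k^\star$ be an integer attaining the maximum in the definition of $\gammaps(M)$, so that $\gammaps(M) = \gamma\big((M^{k^\star})^\dagger\big)/k^\star$. Writing $L = \lazy_\alpha(M) = \alpha I + (1-\alpha)M$, I would first record that $L$ and $M$ share the stationary distribution $\pi$ (Remark \ref{rem; 155}(a)(2)), so the time reversal operation is taken with respect to the same $\diag(\pi)$; in particular $L^\star = \alpha I + (1-\alpha) M^\star$. The natural candidate is to use the \emph{same} power $k = k^\star$ in the maximum for $L$, which gives the lower bound $\gammaps(L) \geq \gamma\big((L^{k^\star})^\dagger\big)/k^\star$, and then to compare $\gamma\big((L^{k^\star})^\dagger\big)$ with $\gamma\big((M^{k^\star})^\dagger\big)$. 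Equivalently, working in the Hilbert space $\ell^2(\pi)$, one can express these gaps via the Dirichlet-form / Rayleigh-quotient characterization: $\gamma(P^\dagger) = 1 - \|P\|_{\ell^2(\pi)\to\ell^2(\pi)}^2$ for the contraction part, so the task becomes bounding the operator norm of $L^{k^\star}$ restricted to the orthogonal complement of constants in terms of that of $M^{k^\star}$.

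The key step is the following: for any $k$, expand $L^k = \sum_{j=0}^k \binom{k}{j}\alpha^{k-j}(1-\alpha)^j M^j$, and compare the quadratic form $\langle f, L^k f\rangle_\pi$ against $\langle f, M^k f\rangle_\pi$ for $f \perp \unit$. I would like to show that a single application of laziness cannot shrink the spectral gap of the reversibilization by more than a factor of order $1-\alpha$, and then iterate or pass directly to $k^\star$. A cleaner route may be to exploit the elementary operator inequality for self-adjoint contractions: if one can show $(L^{k})^\dagger \succeq c(1-\alpha) \cdot \text{(something comparable to }(M^{k})^\dagger\text{)}$ in the Loewner order on the subspace $\unit^\perp$ — or more realistically, directly bound $\gamma((L^k)^\dagger)$ from below using that each eigenvalue $\lambda$ of the (non-reversible) $M$ contributes an eigenvalue $\alpha + (1-\alpha)\lambda$ to $L$, whose modulus-squared governs the reversibilization — then the factor $(1-\alpha)$ emerges from the fact that $|\alpha + (1-\alpha)\lambda|^2 \leq 1$ forces $1 - |\alpha+(1-\alpha)\lambda|^2 \geq (1-\alpha)(1-|\lambda|^2)$ type estimates when $\lambda$ is real, and an analogous bound (possibly losing a $d$-dependent constant, which is why $c$ is allowed to depend on $d$) in the complex / non-normal case via the singular values of $M^{k^\star}$.

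The main obstacle I anticipate is the non-reversibility of $M$: the pseudo spectral gap is precisely designed for this setting, and $(M^{k})^\dagger = (M^\star)^k M^k$ is governed by singular values of $M^k$, not eigenvalues, so the clean real-eigenvalue computation used in Lemma \ref{lem; 440}'s analogue (and in the reversible special case, where by the remark in the excerpt the maximum is at $k=1$ and everything reduces to $1 - \max\{\lambda_2,|\lambda_d|\}^2$ behaving well under $\lambda \mapsto \alpha+(1-\alpha)\lambda$) does not transfer verbatim. One must control how the singular values of $L^{k^\star} = (\alpha I + (1-\alpha)M)^{k^\star}$ relate to those of $M^{k^\star}$; since $\alpha I$ and $M$ do not commute in the relevant inner product unless $M$ is $\pi$-reversible, the binomial expansion produces cross terms $M^{\star i} M^j$ that are hard to sign. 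I would therefore expect the honest proof either to (i) restrict first to establishing the reversible case unconditionally (which the excerpt already essentially does) and then handle the general case by a perturbation/continuity argument with an explicit but possibly $d$-dependent constant, or (ii) invoke a comparison-of-Dirichlet-forms argument à la Diaconis–Saloff-Coste relating the non-reversible chain to its additive reversibilization $\tfrac12(M + M^\star)$, for which laziness behaves linearly, at the price of the constant $c = c(d)$ already permitted in the statement. Pinning down that constant — and confirming it need not degrade worse than polynomially in $d$ — is where the real work lies.
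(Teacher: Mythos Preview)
The statement you are attempting to prove is explicitly labeled a \emph{Conjecture} in the paper, and the paper does \emph{not} prove it in full generality. What the paper actually establishes is only the special case in which $\gammaps(M)$ is attained at $k=1$ (Corollary~\ref{cor; 223}), which in particular covers ergodic reversible chains. That partial result is obtained via Lemma~\ref{lem; 101}: using the Dirichlet-form characterization $\gamma(P)=\min_{f\in\mathcal F}\tfrac12\sum_{i,j}(f(i)-f(j))^2\pi(i)P(i,j)$, expanding $\lazy_\alpha(M)^\dagger=\alpha^2 I+\alpha(1-\alpha)(M+M^\star)+(1-\alpha)^2 M^\dagger$, and then invoking the majorization inequality $\big(\lambda_2(\tfrac12(M+M^\star))\big)^2\le\lambda_2(M^\dagger)$ to absorb the additive-reversibilization term. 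This yields the clean constant $c=1$ in that restricted class, but the argument is intrinsically a $k=1$ computation.

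Your proposal is a reasonable research outline rather than a proof, and you correctly locate the obstruction: once one tries the natural move of reusing the optimal power $k^\star$ for $M$ and comparing $\gamma\big((L^{k^\star})^\dagger\big)$ with $\gamma\big((M^{k^\star})^\dagger\big)$, the binomial expansion of $L^{k^\star}$ produces cross terms $(M^\star)^i M^j$ that have no sign in the non-reversible case, and neither of your suggested routes (i) perturbation/continuity nor (ii) comparison via the additive reversibilization is carried far enough to close the gap. Route~(ii) is in fact exactly what the paper exploits for $k=1$, but the paper does not know how to push it to $k>1$ either; your eigenvalue heuristic $1-|\alpha+(1-\alpha)\lambda|^2\ge(1-\alpha)(1-|\lambda|^2)$ is suggestive but, as you note, controls eigenvalues rather than the singular values of $M^{k^\star}$ that govern $(M^{k^\star})^\dagger$. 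In short, there is no ``paper's own proof'' to compare against here: the full statement remains open, and your sketch, while pointing in plausible directions, does not resolve it.
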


As a consequence of the following lemma we establish Conjecture \ref{con; 10} for a certain class that contains all ergodic and reversible Markov chains.

\paragraph{Proof of Theorem \ref{thm; i4}}

 Let $\pi$ be the stationary distribution of $M$. Denote $$\mathcal{F}=\{f\in\R^d\mid\sum_{i=1}^d f(i)\pi(i)=1 \text{ and } \sum_{i=1}^d f(i)^2=1\}.$$ By \citet[Lemmas 13.11 and 13.12]{levin2017markov} and noticing that $\gamma(I)=0$, we have
\begin{align}
\gamma(\lazy_\alpha(M)^\dagger)
&=\min_{f\in\mathcal{F}} \frac{1}{2}\sum_{i,j\in[d]}(f(i)-f(j))^2 \pi(i)\lazy_\alpha(M)^\dagger(i,j) \nonumber \\&=\min_{f\in\mathcal{F}} \frac{1}{2}\sum_{i,j\in[d]}(f(i)-f(j))^2 \pi(i)\cdot\nonumber\\&\hspace{2.5cm}\left(\alpha^2I  + \alpha(1-\alpha)(M+M^*) + (1-\alpha)^2M^\dagger\right) (i,j)\nonumber \\  &\geq 2\alpha(1-\alpha)\gamma\left(\frac{1}{2}(M+M^*)\right)+(1-\alpha)^2\gamma(M^\dagger). \label{eq; 17}
\end{align}
It is well known (e.g. \citet[Theorem 9.F.4]{marshall1979inequalities}) that $$\left(\lambda_2\left(\frac{1}{2}(M+M^*)\right)\right)^2 \leq \lambda_2(M^\dagger).$$ Thus, $$
\gamma(M^\dagger)\leq  \gamma\left(\frac{1}{2}(M+M^*)\right)\left(1+\lambda_2\left(\frac{1}{2}(M+M^*)\right)\right)\leq 2\gamma\left(\frac{1}{2}(M+M^*)\right).$$
It follows that 
$$
(\ref{eq; 17})\geq \alpha(1-\alpha)\gamma(M^\dagger)+(1-\alpha)^2\gamma(M^\dagger) = (1-\alpha) \gamma(M^\dagger).\nonumber
$$
\qed

\begin{corollary}\label{cor; 223}
Let $M\in \Merg$. Suppose that $\gammaps(M)$ is obtained at $k=1$ (cf. (\ref{eq; 64})). Then
$$\gammaps(\lazy_\alpha(M))\geq(1-\alpha)\gammaps(M).$$ This holds, in particular, if $M$ is reversible. 
\end{corollary}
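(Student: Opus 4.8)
The plan is to derive the bound on $\gammaps(\lazy_\alpha(M))$ directly from Lemma \ref{lem; 101} by exploiting the definition of the pseudo spectral gap as a maximum over $k\in\N$ of $\gamma((M^k)^\dagger)/k$. First I would note that by the hypothesis, $\gammaps(M)=\gamma(M^\dagger)$ (the $k=1$ term). Then, since $\gammaps(\lazy_\alpha(M))$ is a maximum over all $k$, it is in particular at least the $k=1$ term, namely $\gamma(\lazy_\alpha(M)^\dagger)$. Applying Lemma \ref{lem; 101} gives
\[
\gammaps(\lazy_\alpha(M))\;\geq\;\gamma(\lazy_\alpha(M)^\dagger)\;\geq\;(1-\alpha)\gamma(M^\dagger)\;=\;(1-\alpha)\gammaps(M),
\]
which is the claimed inequality. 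This is essentially a one-line deduction once Lemma \ref{lem; 101} is in hand.

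For the ``in particular'' clause, I would invoke the fact already cited in the excerpt (from \citet[p.~18]{wolfer2021statistical}) that for ergodic \emph{reversible} Markov chains the maximum in the definition of the pseudo spectral gap is attained at $k=1$. Hence the hypothesis of the corollary is satisfied for reversible $M$, and the bound follows from the first part. I would also remark, for context, that by Remark \ref{rem; 155}(a)(4) the $\alpha$-lazy version $\lazy_\alpha(M)$ of a reversible $M$ is again reversible, so the quantities involved are consistent, though this is not strictly needed for the argument.

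The only subtlety worth flagging is that the argument genuinely uses that $\gammaps$ is defined as a \emph{supremum} over $k$: we lower-bound $\gammaps(\lazy_\alpha(M))$ by its $k=1$ term, but we need the $k=1$ term of $\gammaps(M)$ to actually achieve the supremum — otherwise we would only get $\gamma(M^\dagger)\le\gammaps(M)$ pointing the wrong way. This is exactly why the hypothesis ``$\gammaps(M)$ is obtained at $k=1$'' is imposed, and it is the reason the corollary falls short of settling Conjecture \ref{con; 10} in general: for arbitrary ergodic $M$ one would need to control $\gamma((\lazy_\alpha(M)^k)^\dagger)$ for the optimal $k$ of the lazy chain in terms of $\gamma((M^{k'})^\dagger)$ for the optimal $k'$ of the original chain, and these optimizing indices need not be related. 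So there is no real obstacle in the proof of the corollary itself — it is a short consequence of Lemma \ref{lem; 101} — the difficulty lives entirely in the gap between this special case and the full conjecture.
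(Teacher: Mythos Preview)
Your proof is correct and matches the paper's own argument essentially line for line: bound $\gammaps(\lazy_\alpha(M))$ below by its $k=1$ term, apply Lemma~\ref{lem; 101}, and use the hypothesis $\gammaps(M)=\gamma(M^\dagger)$; then cite \citet[p.~18]{wolfer2021statistical} for the reversible case. The additional remarks you make (about Remark~\ref{rem; 155}(a)(4) and the role of the $k=1$ hypothesis vis-\`a-vis Conjecture~\ref{con; 10}) are accurate but go beyond what the paper records.
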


\begin{proof}
It holds $$\gammaps(\lazy_\alpha(M))\geq \gamma(\lazy_\alpha(M)^\dagger)\geq (1-\alpha)\gamma(M^\dagger) =(1-\alpha)\gammaps(M).$$
That $\gammaps(M)$ is obtained at $k=1$ if $M$ is reversible was shown in \citet[p. 18]{wolfer2021statistical}. \end{proof}

\begin{remark}
The class $$\{M\in\Merg\mid\gammaps(M) \text{ is obtained at } k=1\},$$ of which ergodic and reversible Markov chains are a subclass, seems to be very large. Indeed, for $M\in \Merg$ it suffices that
$\gamma(M^\dagger)\geq\frac{1}{2}$. This follows from the more general observation that if 
 $\gamma( (M^{k_0}) ^\dagger) \geq 1 - \frac{1}{k_0+1}$ for some $k_0\in\N$ then $\gammaps(M)$ is obtained at $k\leq k_0$.
\end{remark}

\begin{remark}
In the case of ergodic and reversible Markov chains a stronger result holds: By \citet[Theorem 3.3]{paulin2015concentration}, one may replace (at the price of changing the universal constant) $\gammaps(\lazy_\alpha(M))$ with $\gamma(\lazy_\alpha(M))$ in Lemmas \ref{lem; 648} and \ref{lem; 649}. Now, 
\begin{align} 
\gamma(\lazy_\alpha(M)) =& 1 - \lambda_2(\lazy_\alpha(M))\nonumber\\
= & 1 - (\alpha +(1 - \alpha) \lambda_2(M)) \nonumber \\ =& (1 - \alpha) \gamma(M).\nonumber
\end{align}
\end{remark}

\section{Appendix}

\subsection{\texorpdfstring{$\ell_1$}{1}-projection on $\Delta_d$}\label{app; 1}

In Example \ref{ex; 722}(b) it was necessary to project matrices on $\M_d$ with respect to $||\cdot||_\infty$. Since in this norm each row is considered separately, this problem is equivalent to the problem of projecting each of the matrices rows on $\Delta_d$ with respect to the $\ell_1$ norm. Thus, we need to solve (at most) $d$ optimization problems of the following form: For $x\in\R^d$ find an $\ell_1$-projection $P_{\Delta_d}(x)$ of $x$ on $\Delta_d$ (cf. \citet[p. 397]{boyd2004convex}): \begin{equation}\label{eq; 564}
P_{\Delta_d}(x)=\argmin_{y\in\Delta_d}\{||y-x||_1\}\end{equation} where  $$||z||_1 = \sum_{i=1}^d|z|,\; \forall z\in\R^d.$$ Notice that, in contrast to $||\cdot||_p$ for $p>1$, optimization problem (\ref{eq; 564}) has, in general, infinitely many solutions and we will understand $P_{\Delta_d}(x)$ as the set of all such solutions.

The following lemma seems to be well known but we were not able to find a reference. In it, only parts (a) and (b) and $|S|\leq 1$ are relevant for our needs.

\begin{lemma}\label{lem; 669}
Let $x=(x_1,\ldots,x_d)\in\R^d$. Denote $$S=\{i\in[d]\mid x_i<0\} \text { and } s = \sum_{i\in[d]\setminus S}x_i.$$ If $S=[d]$ then choose any $y\in\Delta_d$. Otherwise, define $y=(y_1,\ldots,y_d)\in\Delta_d$ as follows: Set $y_i = 0$ for every $i\in S$. Now, for every $i\in[d]\setminus S$:
\begin{enumerate}
    \item [(a)] If $s=1$ then set $y_i = x_i$ 
    \item [(b)] If $s<1$ then choose any $y_i \geq x_i$ such that $\sum_{j\in[d]\setminus S} x_j= 1$.
    \item [(c)] If $s>1$ then choose any $y_i \leq x_i$ such that $\sum_{j\in[d]\setminus S} x_j= 1$. 
 
\end{enumerate}
Then $y\in P_{\Delta_d}(x)$.
\end{lemma}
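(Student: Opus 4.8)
The plan is to prove $y\in P_{\Delta_d}(x)$ by deriving a lower bound on $||z-x||_1$ valid for \emph{every} $z\in\Delta_d$ and then verifying that the prescribed $y$ attains it. The case $S=[d]$ is immediate: then $x_i<0$ for all $i$, so for any $z\in\Delta_d$ one has $||z-x||_1=\sum_{i}(z_i-x_i)=1-\sum_i x_i$, a constant, hence every $y\in\Delta_d$ is a minimizer. So assume $S\neq[d]$, write $\sigma=\sum_{i\in S}(-x_i)\ge0$, and recall that $x_i\ge0$ for $i\in[d]\setminus S$ and $s=\sum_{i\in[d]\setminus S}x_i$.

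First I would fix an arbitrary $z\in\Delta_d$ and set $t=\sum_{i\in S}z_i\in[0,1]$. On the coordinates in $S$ we have $x_i<0\le z_i$, so $|z_i-x_i|=z_i-x_i$ and $\sum_{i\in S}|z_i-x_i|=t+\sigma$. On the coordinates in $[d]\setminus S$ the triangle inequality gives $\sum_{i\notin S}|z_i-x_i|\ge\big|\sum_{i\notin S}(z_i-x_i)\big|=|(1-t)-s|$, since $\sum_{i\notin S}z_i=1-t$. Combining, $||z-x||_1\ge\sigma+t+|1-t-s|$. A short piecewise analysis of $t\mapsto t+|1-t-s|$ on $t\ge0$ shows it is at least $|1-s|$: if $s\le1$ it equals $1-s$ for $t\in[0,1-s]$ and is strictly larger otherwise, while if $s>1$ it equals $2t+s-1$, minimized at $t=0$. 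Hence $||z-x||_1\ge\sigma+|1-s|$ for all $z\in\Delta_d$.

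Next I would check that the constructed $y$ meets this bound. Since $y_i=0$ on $S$, $\sum_{i\in S}|y_i-x_i|=\sigma$. In case (a), $y_i=x_i$ on $[d]\setminus S$, so $||y-x||_1=\sigma=\sigma+|1-s|$. In case (b), $y_i\ge x_i\ge0$ on $[d]\setminus S$ with $\sum_{i\notin S}y_i=1$ (such $y_i$ exist because $[d]\setminus S\neq\emptyset$ and $s<1$, so the surplus $1-s$ can be distributed among nonnegative coordinates), so $\sum_{i\notin S}|y_i-x_i|=\sum_{i\notin S}(y_i-x_i)=1-s$, giving $||y-x||_1=\sigma+1-s=\sigma+|1-s|$; case (c) is symmetric, with $y_i=x_i/s$ a valid witness to existence, and yields $||y-x||_1=\sigma+s-1=\sigma+|1-s|$. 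In every case $||y-x||_1$ equals the lower bound, so $y\in P_{\Delta_d}(x)$.

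The argument is entirely elementary; the only place needing a little care is the minimization of $t\mapsto t+|1-t-s|$ over $t\ge0$ together with the degenerate configurations ($S=\emptyset$, $|[d]\setminus S|=1$, or $s=0$), all of which are handled by the same computation, so I do not anticipate a genuine obstacle.
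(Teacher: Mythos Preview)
Your proof is correct. The paper's argument proceeds differently: after the trivial case $S=[d]$, it compares an arbitrary competitor $z\in\Delta_d$ directly to the specific $y$ by splitting the indices in $[d]\setminus S$ into several blocks according to the relative ordering of $z_i$, $x_i$, and $y_i$, and then showing via algebraic manipulation that $\sum_i|z_i-x_i|\ge\sum_i|y_i-x_i|$. Your route is more economical: you first prove the closed-form lower bound $\|z-x\|_1\ge\sigma+|1-s|$ using only the triangle inequality on $[d]\setminus S$ and a one-variable minimization in $t=\sum_{i\in S}z_i$, and then check that $y$ saturates it. This avoids the ordering case analysis entirely and, as a byproduct, exhibits the optimal value $\sigma+|1-s|$ explicitly, which the paper's comparison argument never isolates. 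The paper's approach, by contrast, never needs to identify the minimum value and works purely by comparison, but at the cost of more bookkeeping.
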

\begin{proof}
First assume $S=[d]$ and let $y'=(y'_1,\ldots,y'_d)\in\Delta_d$. For each $i\in [d]$ denote $\eps_i = y'_i - y_i$. Notice that $\sum_{i=1}^d\eps_i = 0$. It holds $$\sum_{i=1}^d|y'_i-x_i|= \sum_{i=1}^d(y'_i-x_i) = \sum_{i=1}^d(y_i+\eps_i-x_i)
=\sum_{i=1}^d(y_i-x_i) +\sum_{i=1}^d\eps_i =\sum_{i=1}^d|y_i-x_i|.$$
Now assume $S\neq[d]$. Let $z=(z_1,\ldots,z_d)\in\Delta_d$. We consider each of the three possibilities for $s$ separately:
\begin{enumerate}
    \item [(a)] Without loss, there exists $k\in[d]$ such that $x_i < 0$ for every $1\leq i\leq k$ and $x_i\geq 0$ for every $k+1\leq i\leq d$.  It holds 
    \begin{align}
    \sum_{i=1}^d|z_i-x_i|=&\sum_{i=1}^{k}|z_i-x_i|+\sum_{i=k+1}^{d}|z_i-x_i|\nonumber\\\geq&\sum_{i=1}^{k}|0-x_i|+\sum_{i=k+1}^{d}|x_i-x_i|   \nonumber\\=&\sum_{i=1}^{k}|y_i-x_i|+\sum_{i=k+1}^{d}|y_i-x_i|\nonumber\\ =& \sum_{i=1}^d|y_i-x_i|.  \nonumber
    \end{align}
    \item [(b)] Without loss, there exist $1\leq k\leq l\leq m\leq d$ such that 
    \begin{align}
        x_i<0=y_i\leq z_i, & \;\;\forall 1\leq i\leq k \nonumber\\
        0\leq z_i\leq x_i\leq y_i, & \;\;\forall k+1\leq i\leq l \nonumber\\
        0\leq x_i\leq z_i\leq y_i, & \;\;\forall l+1\leq i\leq m \nonumber\\
        0\leq x_i\leq y_i\leq z_i, & \;\;\forall m+1\leq i\leq d \nonumber.
    \end{align} Then, 
    \begin{align}
        \sum_{i=1}^{d}|z_i-x_i|  
        =& \sum_{i=1}^{d}|y_i-x_i|+\sum_{i=1}^{k}(z_i-y_i)  +\sum_{i=k+1}^{l}\left((y_i-z_i)-2(y_i-x_i)\right) -\nonumber\\ &\hspace{2cm}\sum_{i=l+1}^{m}(y_i-z_i)+\sum_{i=m+1}^{d}(z_i-y_i).
    \end{align}
    Thus, it suffices to show that \begin{equation}\label{eq; 334}
    \sum_{i=1}^{k}(z_i-y_i)+\sum_{i=k+1}^{l}\left((y_i-z_i)-2(y_i-x_i)\right)  -\sum_{i=l+1}^{m}(y_i-z_i)+\sum_{i=m+1}^{d}(z_i-y_i)\geq0.
    \end{equation} Indeed,
    \begin{align}
        (\ref{eq; 334}) \iff& \overbrace{\sum_{i=1}^{d}z_i}^{=1}+\sum_{i=k+1}^{l}\left(-2(-x_i+z_i)\right) 
        -\overbrace{\sum_{i=k+1}^{d}y_i}^{=1}\geq0 \nonumber\\\iff& \sum_{i=k+1}^{l}(x_i-z_i)\geq0\end{align}
        
        and, by assumption, $x_i\geq z_i$ for every $k+1\leq i\leq l$.
    \item [(c)] Similar to the previous case.
\end{enumerate}
\end{proof}

An ordinary triangle inequality trick would have introduced an additional $2$-factor on the sample complexity in Example \ref{ex; 722}(b). The following lemma shows that this may be avoided:

\begin{lemma}\label{lem; 567}
Suppose $(y_1,\ldots,y_d)\in\Delta_d$ and let $x=(x_1,\ldots,x_d)\in\R^d$ such that $||x||_1=1$. Let $\eps>0$ and assume that $||x-y||_1<\eps$. Then there exists $x'=(x'_1,\ldots,x'_d)\in P_{\Delta_d}(x)$ such that $||x'-y||_1<\eps$. 
\end{lemma}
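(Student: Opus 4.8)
The plan is to use the characterization of $P_{\Delta_d}(x)$ given by Lemma~\ref{lem; 669}, which offers substantial freedom in the choice of $x'$ on the coordinates outside the negative set $S$, and to exploit that freedom to drag $x'$ toward $y$ without increasing the $\ell_1$ distance budget. First I would recall the notation of Lemma~\ref{lem; 669}: let $S = \{i \in [d] \mid x_i < 0\}$ and $s = \sum_{i \notin S} x_i$. Since $\|x\|_1 = 1$, we have $s = 1 + \sum_{i \in S}(-x_i) \geq 1$, with equality precisely when $S = \emptyset$; so only cases (a) and (c) of Lemma~\ref{lem; 669} can occur, which is convenient.

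In the easy case $S = \emptyset$ (case (a)), $x$ itself lies in $\Delta_d$, so $P_{\Delta_d}(x) = \{x\}$ and we take $x' = x$; then $\|x'-y\|_1 = \|x-y\|_1 < \eps$. In the main case $S \neq \emptyset$ (case (c)), set $x'_i = 0$ for $i \in S$, matching what Lemma~\ref{lem; 669} prescribes. On the complement $[d]\setminus S$ we have freedom: we must pick $x'_i$ with $0 \le x'_i \le x_i$ and $\sum_{i \notin S} x'_i = 1$; subject to that, I would choose the $x'_i$ as close to $y_i$ as possible, i.e. $x'_i = \min\{x_i, \max\{y_i, \cdot\}\}$ clipped so the sum constraint holds — concretely, since $\sum_{i\notin S} y_i \ge 1 - \sum_{i \in S} y_i$ need not equal $1$, one should argue that the "water-filling" choice $x'_i = \min(x_i, y_i + \lambda)$ (or $\max(0,\dots)$ as needed) for an appropriate multiplier $\lambda$ stays in $[0,x_i]$ and sums to $1$. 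The key inequality to then verify is
\begin{align}
\|x' - y\|_1 &= \sum_{i \in S} |0 - y_i| + \sum_{i \notin S} |x'_i - y_i| \nonumber \\
&= \sum_{i \in S} y_i + \sum_{i \notin S} |x'_i - y_i| \nonumber
\end{align}
and to compare this with $\|x - y\|_1 = \sum_{i\in S}(y_i - x_i) + \sum_{i \notin S}|x_i - y_i|$. Since $x_i < 0$ for $i \in S$, the first sum strictly decreases by passing from $x$ to $x'$ on $S$ (we gain $\sum_{i\in S}(-x_i) = s-1 > 0$), and on $[d]\setminus S$ the clipping choice never moves $x'_i$ further from $y_i$ than $x_i$ was, so $\sum_{i\notin S}|x'_i - y_i| \le \sum_{i\notin S}|x_i - y_i|$ provided the total mass we need to remove from $[d]\setminus S$, namely $s-1$, can be absorbed while staying between $y_i$ and $x_i$ — and it can, because $s - 1 \le \sum_{i \notin S} x_i - (1 - \sum_{i \in S} y_i)_+ \le \sum_{i\notin S}(x_i - y_i)_+$ type bookkeeping shows there is enough slack. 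Adding up, $\|x'-y\|_1 \le \|x-y\|_1 < \eps$.

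The main obstacle is the bookkeeping in the last step: one must show the water-filling assignment genuinely keeps each $x'_i \in [0,x_i]$ while hitting $\sum x'_i = 1$ and simultaneously never increasing any coordinate's distance to $y_i$ compared to $x_i$. The cleanest way to handle this is probably to process the removal of the excess mass $s-1$ greedily from the coordinates where $x_i$ most exceeds $y_i$ (so each unit of removed mass strictly decreases $|x_i - y_i|$ until $x'_i = y_i$, and only then starts increasing it, but by then the gain $s-1$ already cancels against any later loss because $\sum_{i\in S} y_i$ strictly dropped) — essentially an exchange argument. I would present this as: remove mass only from coordinates with $x_i > y_i$ first; the total such surplus $\sum_{i\notin S}(x_i-y_i)_+$ is at least $s-1$ (this follows since $\sum_{i\notin S}\min(x_i,y_i) \le \sum_{i\notin S} y_i = 1 - \sum_{i\in S}y_i \le 1$), so the removal terminates before we ever have to push any $x'_i$ below $y_i$, whence $|x'_i - y_i| = (x_i - y_i) - (\text{removed}_i) \le |x_i - y_i|$ coordinatewise on $[d]\setminus S$, and the $S$-coordinates contribute strictly less. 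This finishes the proof.
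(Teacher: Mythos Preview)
Your argument is correct. The key step---that the surplus $\sum_{i\notin S}(x_i-y_i)_+$ on the nonnegative coordinates is at least $s-1$, so the excess mass can be removed entirely from coordinates where $x_i>y_i$ without ever overshooting $y_i$---is sound, and from there the coordinatewise comparison goes through. (Your claim that $s\geq 1$ of course reads $\|x\|_1=1$ as $\sum_i x_i=1$, which is the hypothesis the paper actually uses.)

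The paper proceeds differently. Rather than invoking Lemma~\ref{lem; 669} and water-filling toward $y$, it reorders so that the negative coordinates come first (indices $1,\dots,k$), picks the least $l>k$ with $\sum_{i\le l}x_i\ge 0$, and sets $x'_i=0$ for $i<l$, $x'_l=\sum_{i\le l}x_i$, $x'_i=x_i$ for $i>l$; a short chain of inequalities then gives $\|x'-y\|_1\le\|x-y\|_1$. The crucial distinction is that the paper's $x'$ depends only on $x$, not on $y$: it is a bona fide procedure one can run on the estimated row, whereas your $x'$ is built using the unknown true row $y$. Since Example~\ref{ex; 722}(b) defines $g$ as ``the procedure from Lemma~\ref{lem; 567}'' applied to each row of $\lazy_\alpha^{-1}(\hat M)$, the $y$-independence is exactly what the application needs. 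Your construction still proves the existence statement of the lemma cleanly, and it has the merit of making the connection to Lemma~\ref{lem; 669} explicit; but it would not by itself furnish the estimator $g$.
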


\begin{proof}
If $x_1,\ldots,x_d\geq 0$ then $x\in\Delta_d$ and we may take $x'=x$. Otherwise, assume without loss that there is $1\leq k\leq d$ such that $x_i<0$ for $1\leq i\leq k$ and $x_i\geq 0$ for $k+1\leq i\leq d$. Let $k+1\leq l\leq d$ be minimal such that $\sum_{i=1}^l x_i\geq 0$ (such $l$ must exist since $||x||_1=1$). Define $x'=(x'_1,\ldots,x'_d)\in\Delta_d$ as follows: For $1\leq i\leq d$ let
$$ x'_i=
\begin{cases}
			0, & \text{if } 1\leq i\leq l-1 \\
             \sum_{j=1}^l x_j, & \text{if } i=l \\
            x_i, & \text{if } l+1\leq i\leq d.
		 \end{cases}
$$ It holds
\begin{align}
\sum_{i=1}^d|x'_i-y_i| = &\sum_{i=1}^{l-1}y_i + |\sum_{i=1}^l x_i- y_l| + \sum_{i=l+1}^d|x_i-y_i| \nonumber\\ \leq& \sum_{i=1}^{l-1}y_i -  \sum_{i=1}^{l-1}x_i + \sum_{i=l}^d|x_i-y_i|\nonumber\\\leq& \sum_{i=1}^d|x_i-y_i|<\eps\nonumber.
\end{align}
\end{proof}

\subsection{Proof of Lemma \ref{lem; 440}}\label{proof; 1}

\begin{proof}
Let $\pi$ be the stationary distribution of $M$ and suppose $(X_t)_{t\in\N}\sim(M,\mu)$. Let $(B_t)_{t\in\N}$ be a sequence of i.i.d. random variables, independent from $(X_t)_{t\in\N}$, such that $B_t \sim \text{Bernoulli}(1-\alpha)$ for each $t\in\N$. For $t\geq 0$ define $N_t = \sum_{i=1}^t B_i$ and $X'_t =X_{N_t}$. Clearly, $(X'_t)_{t\in\mathbb{N}}\sim(\lazy_\alpha(M),\mu)$.

For $i\in[d]$ and $t_0\in\N$ to be determined later we have for $t\geq t_0$:
\begin{align}
 ||(\lazy_\alpha(M))^t(i,\cdot) - \pi||\TV 
&= ||\sum_{n=0}^t \PP(N_t=n)\M^n(i,\cdot)-\pi ||\TV\nonumber\\ &\leq  \sum_{n=0}^{t_0} \PP(N_t=n)||M^n(i,\cdot)-\pi ||\TV+\nonumber\\ &\hspace{2cm}\sum_{n=t_0+1}^t \PP(N_t=n)||M^n(i,\cdot)-\pi ||\TV \nonumber \\ &\leq \PP(N_t\leq t_0) + \max_{n> t_0}\{||M^n(i,\cdot)-\pi ||\TV\}. \nonumber
\end{align} Consider the second term in the last expression: Since the total variation distance is monotone decreasing (cf. \citet[Proposition 7]{lalley2009convergence}), we have $$\max_{n> t_0}\{||M^n(i,\cdot)-\pi ||\TV\}= ||M^{t_0+1}(i,\cdot)-\pi ||\TV.$$ Thus, for $t_0\geq \tmix(M,\eps/2)$ it holds $$\max_{n> t_0}\{||M^n(x,\cdot)-\pi ||\TV\}\leq \eps/2.$$
Now, by Hoeffding's inequality, if $t_0 = (1-\alpha-\delta)t$ for some $\delta>0$ then $$\PP(N_t\leq (1-\alpha-\delta)t)\leq e^{-2\delta^2t}.$$ Taking $\delta=\alpha/2$ we obtain $$\PP(N_{2t_0/(1-\alpha)}\leq t_0)\leq e^{-2(1-\alpha) t_0}.$$ It follows that if $t_0 \geq \frac{\ln\frac{2}{\eps}}{1-\alpha}$ then $\PP(N_{2t_0/(1-\alpha)}\leq t_0)\leq\eps/2$. Thus, we may take $$t_0 = \max\left\{ \frac{\ln\frac{2}{\eps}}{1-\alpha},\tmix(M,\eps/2)\right\}.$$ The last assertion follows from \citet[(4.36) on p. 55]{levin2017markov}. 
\end{proof}

\bibliography{bibliography} 

\begin{thebibliography}{24}
\providecommand{\natexlab}[1]{#1}
\providecommand{\url}[1]{\texttt{#1}}
\expandafter\ifx\csname urlstyle\endcsname\relax
  \providecommand{\doi}[1]{doi: #1}\else
  \providecommand{\doi}{doi: \begingroup \urlstyle{rm}\Url}\fi

\bibitem[Batu et~al.(2001)Batu, Fischer, Fortnow, Kumar, Rubinfeld, and
  White]{batu2001testing}
T.~Batu, E.~Fischer, L.~Fortnow, R.~Kumar, R.~Rubinfeld, and P.~White.
\newblock Testing random variables for independence and identity.
\newblock In \emph{Proceedings 42nd IEEE Symposium on Foundations of Computer
  Science}, pages 442--451. IEEE, 2001.

\bibitem[Boyd et~al.(2004)Boyd, Boyd, and Vandenberghe]{boyd2004convex}
S.~Boyd, S.~P. Boyd, and L.~Vandenberghe.
\newblock \emph{Convex optimization}.
\newblock Cambridge university press, 2004.

\bibitem[Bui and Sohier(2007)]{bui2007compute}
A.~Bui and D.~Sohier.
\newblock How to compute times of random walks based distributed algorithms.
\newblock \emph{Fundamenta Informaticae}, 80\penalty0 (4):\penalty0 363--378,
  2007.

\bibitem[Chan et~al.(2021)Chan, Ding, and Li]{chan2021learning}
S.~O. Chan, Q.~Ding, and S.~H. Li.
\newblock Learning and testing irreducible {M}arkov chains via the $ k $-cover
  time.
\newblock In \emph{Algorithmic Learning Theory}, pages 458--480. PMLR, 2021.

\bibitem[Cherapanamjeri and Bartlett(2019)]{cherapanamjeri2019testing}
Y.~Cherapanamjeri and P.~L. Bartlett.
\newblock Testing symmetric {M}arkov chains without hitting.
\newblock In \emph{Conference on Learning Theory}, pages 758--785. PMLR, 2019.

\bibitem[Daskalakis et~al.(2018)Daskalakis, Dikkala, and
  Gravin]{daskalakis2018testing}
C.~Daskalakis, N.~Dikkala, and N.~Gravin.
\newblock Testing symmetric {M}arkov chains from a single trajectory.
\newblock In \emph{Conference On Learning Theory}, pages 385--409. PMLR, 2018.

\bibitem[Ding et~al.(2011)Ding, Lee, and Peres]{ding2011cover}
J.~Ding, J.~R. Lee, and Y.~Peres.
\newblock Cover times, blanket times, and majorizing measures.
\newblock In \emph{Proceedings of the forty-third annual ACM symposium on
  Theory of computing}, pages 61--70, 2011.

\bibitem[Feige and Rabinovich(2003)]{feige2003deterministic}
U.~Feige and Y.~Rabinovich.
\newblock Deterministic approximation of the cover time.
\newblock \emph{Random Structures \& Algorithms}, 23\penalty0 (1):\penalty0
  1--22, 2003.

\bibitem[Feige and Zeitouni(2009)]{feige2009deterministic}
U.~Feige and O.~Zeitouni.
\newblock Deterministic approximation for the cover time of trees.
\newblock \emph{arXiv preprint arXiv:0909.2005}, 2009.

\bibitem[Fried and Wolfer(2021)]{fried2021identity}
S.~Fried and G.~Wolfer.
\newblock Identity testing of reversible {M}arkov chains.
\newblock \emph{arXiv preprint arXiv:2105.06347}, 2021.

\bibitem[Han et~al.(2015)Han, Jiao, and Weissman]{han2015minimax}
Y.~Han, J.~Jiao, and T.~Weissman.
\newblock Minimax estimation of discrete distributions.
\newblock In \emph{2015 IEEE International Symposium on Information Theory
  (ISIT)}, pages 2291--2295. IEEE, 2015.

\bibitem[Hao et~al.(2018)Hao, Orlitsky, and Pichapati]{hao2018learning}
Y.~Hao, A.~Orlitsky, and V.~Pichapati.
\newblock On learning {M}arkov chains.
\newblock \emph{arXiv preprint arXiv:1810.11754}, 2018.

\bibitem[Hermon(2016)]{Hermon2016MaximalIA}
J.~Hermon.
\newblock \emph{Maximal inequalities and mixing times}.
\newblock PhD thesis, UC Berkeley, 2016.

\bibitem[Kamath et~al.(2015)Kamath, Orlitsky, Pichapati, and
  Suresh]{kamath2015learning}
S.~Kamath, A.~Orlitsky, D.~Pichapati, and A.~T. Suresh.
\newblock On learning distributions from their samples.
\newblock In \emph{Conference on Learning Theory}, pages 1066--1100. PMLR,
  2015.

\bibitem[Lalley(2009)]{lalley2009convergence}
S.~P. Lalley.
\newblock Convergence rates of {M}arkov chains, 2009.

\bibitem[Levin and Peres(2017)]{levin2017markov}
D.~A. Levin and Y.~Peres.
\newblock \emph{Markov chains and mixing times}, volume 107.
\newblock American Mathematical Soc., 2017.

\bibitem[Marshall et~al.(1979)Marshall, Olkin, and
  Arnold]{marshall1979inequalities}
A.~W. Marshall, I.~Olkin, and B.~C. Arnold.
\newblock \emph{Inequalities: theory of majorization and its applications},
  volume 143.
\newblock Springer, 1979.

\bibitem[Montenegro and Tetali(2006)]{montenegro2006mathematical}
R.~Montenegro and P.~Tetali.
\newblock Mathematical aspects of mixing times in {M}arkov chains.
\newblock \emph{Foundations and Trends in Theoretical Computer Science},
  1\penalty0 (3):\penalty0 237--354, 2006.

\bibitem[Orlitsky and Suresh(2015)]{orlitsky2015competitive}
A.~Orlitsky and A.~T. Suresh.
\newblock Competitive distribution estimation: Why is {G}ood-{T}uring good.
\newblock In \emph{NIPS}, pages 2143--2151, 2015.

\bibitem[Paulin(2015)]{paulin2015concentration}
D.~Paulin.
\newblock Concentration inequalities for {M}arkov chains by {M}arton couplings
  and spectral methods.
\newblock \emph{Electron. J. Probab.}, 20:\penalty0 32 pp., 2015.
\newblock \doi{10.1214/EJP.v20-4039}.
\newblock URL \url{https://doi.org/10.1214/EJP.v20-4039}.

\bibitem[Valiant and Valiant(2017)]{valiant2017automatic}
G.~Valiant and P.~Valiant.
\newblock An automatic inequality prover and instance optimal identity testing.
\newblock \emph{SIAM Journal on Computing}, 46\penalty0 (1):\penalty0 429--455,
  2017.

\bibitem[Wolfer and Kontorovich(2019)]{wolfer2019estimating}
G.~Wolfer and A.~Kontorovich.
\newblock Estimating the mixing time of ergodic {M}arkov chains.
\newblock In A.~Beygelzimer and D.~Hsu, editors, \emph{Proceedings of the
  Thirty-Second Conference on Learning Theory}, volume~99 of \emph{Proceedings
  of Machine Learning Research}, pages 3120--3159, Phoenix, USA, 25--28 Jun
  2019. PMLR.
\newblock URL \url{http://proceedings.mlr.press/v99/wolfer19a.html}.

\bibitem[Wolfer and Kontorovich(2020)]{wolfer2020minimax}
G.~Wolfer and A.~Kontorovich.
\newblock Minimax testing of identity to a reference ergodic {M}arkov chain.
\newblock In S.~Chiappa and R.~Calandra, editors, \emph{Proceedings of the
  Twenty Third International Conference on Artificial Intelligence and
  Statistics}, volume 108 of \emph{Proceedings of Machine Learning Research},
  pages 191--201, Online, 26--28 Aug 2020. PMLR.
\newblock URL \url{http://proceedings.mlr.press/v108/wolfer20a.html}.

\bibitem[Wolfer and Kontorovich(2021)]{wolfer2021statistical}
G.~Wolfer and A.~Kontorovich.
\newblock Statistical estimation of ergodic {M}arkov chain kernel over discrete
  state space.
\newblock \emph{Bernoulli}, 27\penalty0 (1):\penalty0 532--553, 02 2021.
\newblock \doi{10.3150/20-BEJ1248}.
\newblock URL \url{https://doi.org/10.3150/20-BEJ1248}.

\end{thebibliography}
\bibliographystyle{abbrvnat}
\end{document}